\definecolor{orange}{rgb}{1,0.4,0.0}
\DeclarePairedDelimiterXPP{\KL}[2]{D_\textnormal{KL}}{(}{)}{}{%
#1\:\delimsize\|\:#2%
}
\DeclarePairedDelimiterXPP{\RD}[2]{D_{$\alpha$}}{(}{)}{}{%
#1\:\delimsize\|\:#2%
}
\DeclarePairedDelimiterXPP\Prob[1]{\mathbb{P}}{\lbrace}{\rbrace}{}{

#1}
\DeclarePairedDelimiterXPP{\lnorm}[2]{}{\lVert}{\rVert}{_{#2}}{#1}
\newcommand{\TV}{\ensuremath{\textsc{\texttt{TV}}}}
\newcommand{\bD}{\ensuremath{\mathbb{D}}}
\newcommand{\bE}{\ensuremath{\mathbb{E}}}
\newcommand{\bP}{\ensuremath{\mathbb{P}}}
\newcommand{\bQ}{\ensuremath{\mathbb{Q}}}
\newcommand{\bR}{\ensuremath{\mathbb{R}}}
\newcommand{\bW}{\ensuremath{\mathbb{W}}}
\newcommand{\cA}{\ensuremath{\mathcal{A}}}
\newcommand{\cB}{\ensuremath{\mathcal{B}}}
\newcommand{\cH}{\ensuremath{\mathcal{H}}}
\newcommand{\cO}{\ensuremath{\mathcal{O}}}
\newcommand{\cS}{\ensuremath{\mathcal{S}}}
\newcommand{\cU}{\ensuremath{\mathcal{U}}}
\newcommand{\cV}{\ensuremath{\mathcal{V}}}
\newcommand{\cX}{\ensuremath{\mathcal{X}}}
\newcommand{\cY}{\ensuremath{\mathcal{Y}}}
\newcommand{\cZ}{\ensuremath{\mathcal{Z}}}
\newcommand{\ccS}{\ensuremath{\mathscr{S}}}
\newcommand{\ccX}{\ensuremath{\mathscr{X}}}
\newcommand{\ccY}{\ensuremath{\mathscr{Y}}}
\newcommand{\ccZ}{\ensuremath{\mathscr{Z}}}
\newcommand{\mi}{\textup{I}}
\newcommand{\ent}{\textup{H}}
\newcommand{\stack}[2]{\stackrel{\mathclap{(#1)}}{#2}}
\newtheoremstyle{mytheoremstyle} 
    {\topsep}                    
    {\topsep}                    
    {\itshape}                   
    {}                           
    {\bf}                        
    {.}                          
    {.5em}                       
    {}  
\theoremstyle{mytheoremstyle}
\newtheorem{lemma}{Lemma}
\newtheorem{definition}{Definition}
\newtheorem{remark}{Remark}
\newtheorem{assumption}{Assumption}
\def\BibTeX{{\rm B\kern-.05em{\sc i\kern-.025em b}\kern-.08em
    T\kern-.1667em\lower.7ex\hbox{E}\kern-.125emX}}
\begin{document}

\title{An Information-Theoretic Analysis of Bayesian Reinforcement Learning
\thanks{This work was supported in part by the Knut and Alice Wallenberg Foundation, the Swedish Foundation for Strategic Research, and the Swedish Research Council under contract 2019-03606. 
}}

\author{\IEEEauthorblockN{Amaury Gouverneur, Borja Rodríguez-Gálvez, Tobias J. Oechtering, and Mikael Skoglund} \IEEEauthorblockA{Division of Information Science and Engineering (ISE)}\IEEEauthorblockA{KTH Royal Institute of Technology}\IEEEauthorblockA{\texttt{\{amauryg,borjarg,oech,skoglund\}@kth.se}}}

\maketitle

\begin{abstract}
Building on the framework introduced by Xu and Raginksy~\cite{xu2020minimum} for supervised learning problems, we study the best achievable performance for model-based Bayesian reinforcement learning problems. With this purpose, we define minimum Bayesian regret (MBR) as the difference between the maximum expected cumulative reward obtainable either by learning from the collected data or by knowing the environment and its dynamics.
We specialize this definition to reinforcement learning problems modeled as Markov decision processes (MDPs) whose kernel parameters are unknown to the agent 
and whose uncertainty 
is expressed by a prior distribution.  
One method for deriving upper bounds on the MBR is presented and specific bounds based on the relative entropy and the Wasserstein distance are given. We then focus on two particular cases of MDPs, the multi-armed bandit problem (MAB) and the online optimization with partial feedback problem. For the latter problem, we show that our bounds can recover from below the current information-theoretic bounds by Russo and Van Roy~\cite{russo2016information}. 
\end{abstract}

\begin{IEEEkeywords}
information-theoretic bounds, Markov decision process, multi-armed bandit problem, reinforcement learning, Bayesian regret, mutual information, Wasserstein distance
\end{IEEEkeywords}

\section{Introduction}
\label{sec:introduction}
In model-based reinforcement learning problems~\cite{sutton1999policy,bertsekas1996neuro}, an agent interacts sequentially with a dynamic environment by taking actions in order  to maximize its long-term performance.

This paper, as most related work in this field, focuses on systems and control objectives that are modeled as finite time horizon \emph{Markov decision processes} (MDPs). At each time $t = 1, \ldots, T$, the agent observes the environment state $S_t$ and takes an action $A_t$ following a decision policy $\varphi_t$. Independently of the action, the environment produces a random outcome $Y_t$. The reward is obtained as a deterministic function of the system’s outcome and the chosen action, $R_t = r(Y_t,A_t)$. The data is collected in a history $H^{t+1} = (S_1,A_1,R_1, \ldots , S_{t},A_{t},R_{t})$ and the system evolves to a state $S_{t+1}$. The procedure then repeats until the end of the time horizon, $t=T$. 

In the Bayesian setting, the MDP model $\Phi$ is treated as a random element of some parametric model family, which is drawn according to a prior distribution of the environment parameters $\Theta$. 
The goal of the agent is to identify a policy that yields the highest expected cumulative reward $\bE[\sum_{t=1}^Tr(Y_t,A_t)]$ under the uncertainty of these parameters.

\textcolor{black}{
The decision-making process in Bayesian reinforcement learning is typically more computationally demanding than the frequentist approach, however this setting presents various advantages as it facilitates regularization, handles parameter uncertainty naturally, and provides ways to solve the exploration-exploitation trade-off~\cite{ghavamzadeh2015bayesian}.
}

Following the work from Xu and Raginsky on Bayesian supervised learning~\cite{xu2020minimum}, \textcolor{black}{ we put aside the computational aspect to study the best achievable performance for model-based Bayesian reinforcement learning.} We define the \textit{minimum Bayesian regret} as the difference between the \textit{Bayesian cumulative reward} $R_\phi(\kappa_H)$, defined as the maximum expected cumulative reward  attainable by learning from the sequentially collected data, and $R_\phi(\kappa_\Theta)$, the maximum expected cumulative reward that could be reached if the environment parameters were known. 
We develop information-theoretic upper bounds on the minimum Bayesian regret under various assumptions for the reward function using the relative entropy and the Wasserstein distance. 


\paragraph*{Structure of the paper} Section~\ref{sec:contributions} summarizes the contributions of this paper. The notations are introduced in Section~\ref{sec:notations}. Section~\ref{sec:model_definitions} presents the different models of decision processes studied and gives the definition of the Bayesian cumulative reward and the minimum Bayesian regret. Section~\ref{sec:upper_bounds} and~\ref{sec:upper_bounds_MAB} are devoted to information-theoretic upper bounds on the MBR. Finally, conclusions are presented in Section~\ref{sec:conclusion}.

\section{Contributions}
\label{sec:contributions}


In this work, inspired by Xu and Raginsky's~\cite{xu2020minimum} framework on the study of the best achievable performance of supervised learning problems, we propose an analogous framework for the study of model-based reinforcement learning problems. 
\textcolor{black}{Our contributions in this regard can be summarized as:}
\begin{color}{black}
\begin{enumerate}
    \item \textcolor{black}{Developing a theoretical framework of model-based Bayesian MDPs suited for information-theoretic studies.}
    \item Proposing a definition of the minimum Bayesian regret (MBR) for reinforcement learning problems modeled as Markov decision processes.
    \item Presenting a data processing inequality for the Bayesian cumulative reward in Lemma~\ref{lem:data_process}.
    \item Formulating upper bounds on the $\textnormal{MBR}$ for general MDPs based on the relative entropy (Proposition~\ref{prop:kl_div_subgaussian}) and the Wasserstein distance (Proposition~\ref{prop:wasserstein}). We present particular cases of these bounds for the case of bounded reward functions in Corollaries~\ref{cor:kl_div_bounded_reward} and~\ref{cor:wasserstein}, and the tightness of these results are compared in Remark~\ref{rem:wasserstein_is_tighter}.
    \item Deriving MBR bounds for the multi-armed bandit and for the online optimization with partial feedback problems. In this last setting, we show how our bound recovers \emph{from below} results from Russo and Van Roy~\cite{russo2016information}. 
\end{enumerate}
\end{color}

\section{Notations and preliminaries}
\label{sec:notations}

Throughout the paper, random variables $X$ are written in capital letters, their realizations $x$ in lower-case letters, and their set of outcomes $\cX$ in calligraphic letters. The probability distributions of a random variable $X$ is denoted as $\bP_X$. When more than one random variable is considered, e.g., $X$ and $Y$, we use $\bP_{X,Y}$ to denote their joint distribution and $\bP_X \bP_Y$ for their product distribution\footnote{Note that this slight abuse of notation does not mean that the product distribution is the product of the distributions.}. We write the conditional probability distribution of $Y$ given $X$ as $\bP_{Y|X}$, defining a probability distribution $\bP_{Y|X=x}$ over $\cY$ for each element $x\in \cX$. 

We use the underscore notation $X_t$ to represent a random variable at time $t=1,\ldots, T$ and  the exponent notation $X^t$ to denote a sequence of random variables $X^t \equiv (X_1,\ldots,X_t)$ for $t=2,\ldots,T$. For consistency we let $X^1 \equiv X_1$.

The relative entropy between two probability distributions $\bP$ and $\bQ$ is defined as $\KL{\bP}{\bQ} := \int \log \big( \frac{d\bP}{d\bQ} \big) d\bP$ if $\bP$ is absolutely continuous with respect to $\bQ$ and $\KL{\bP}{\bQ} \to \infty$ otherwise. The notation $d\bP/d\bQ$ is the Radon-Nikodym derivative. Similarly, the mutual information between $X$ and $Y$ is defined as $\mi(X;Y) := \KL{\bP_{X,Y}}{\bP_{X} \bP_{Y}}$, and the conditional mutual information between $X$ and $Y$, given $Z$, as $\mi(X;Y|Z) := \bE[\mi(X;Y|Z=z)]$, where $\mi(X;Y|Z=z) := \KL{\bP_{X,Y|Z=z}}{\bP_{X|Z=z}  \bP_{Y|Z=z}}$.

Finally, if two probability distributions $\bP$ and $\bQ$ are defined in a Polish space $\cX$ with respect to a metric $\rho$, then their Wasserstein distance of order $p \geq 1$ is $\bW_p(\bP,\bQ) := \big(\inf_{\bD \in \Pi(\bP,\bQ)} \int \rho d\bD \big)^{1/p}$, where $\Pi(\bP,\bQ)$ is the set of all couplings of $\bP$ and $\bQ$; i.e., all joint distributions on $\cX \times \cX$ with marginals $\bP$ and $\bQ$. As this work is focused on upper bounds and since by H\"older's inequality $\bW_p \leq \bW_q$ for all $p \leq q$~\cite[Remark 6.6]{villani2009optimal}, in what follows, we will only be using the Wasserstein distance of order $1$, $\bW := \bW_1$. \textcolor{black}{For a discrete random variable $X$, the Shannon entropy is defined as $\ent(X) \coloneqq \bE[-\log(\bP_X(X))]$.}

\section{Model and Definitions}
\label{sec:model_definitions}

In this section we first introduce formally Markov decision processes. We then present the multi-armed bandit and the online optimization with partial feedback problems two special cases of MDPs. After that, we describe Bayesian cumulative reward and prove that it respects a data-processing inequality.
Finally, we define minimum Bayesian regret. 


\subsection{Markov Decision Process}
\label{subsec:markov_decision_process}
In a \emph{Markov decision process} (MDP), at each time step $1, \ldots, T$, an agent interacts with the environment by observing the system's state $S_t \in \cS$ and selecting accordingly an action $A_t \in \cA$. The system then produces an outcome $Y_t \in \cY$ which the agent associates with a scalar reward $R_t \in \bR$. 

In Bayesian reinforcement learning, the environment is completely characterized by a random variable $\Theta \in \cO$ with probability distribution $\bP_\Theta$. Therefore,  an MDP $\Phi$ is defined by a transition kernel $\kappa_{\textnormal{trans}} : \ccS \times (\cS \times \cA \times \cO) \to [0,1]$ such that $\bP_{S_{t+1}|S_t, A_t, \Theta} = \kappa_{\textnormal{trans}}(\cdot,(S_t, A_t, \Theta))$, an outcome kernel $\kappa_{\textnormal{out}} : \ccY \times (\cS \times \cO) \to [0,1]$ such that $\bP_{Y_{t}|S_t, \Theta} = \kappa_{\textnormal{out}}(\cdot,(S_t, \Theta))$, an initial state prior distribution $\bP_{S|\Theta}$ such that $S_1 \sim \bP_{S|\Theta}$, and a reward function $r: \cY \times \cA \to \bR$. The reward is a deterministic function of the system's outcome and the chosen action, hence there is a reward's kernel $\kappa_{\textnormal{reward}} : \cB(\bR) \times (\cS, \cA, \cO) \to [0,1]$ such that $\bP_{R_t|S_t, A_t, \Theta} = \kappa_{\textnormal{reward}}(\cdot,(S_t, A_t, \Theta))$.

The task in Bayesian reinforcement learning is to learn a policy $\varphi = \lbrace \varphi_t: \cS \times \cH^t \to \cA \rbrace_{t=1}^T$ taking an action $A_t$ based on the current observation $S_t$ and the past collected data $H^t$, where $H_{t+1} = (S_{t}, A_{t}, R_{t})$ that maximizes the \emph{cumulative expected reward} $r_\Phi(\varphi) \coloneqq \bE \big[ \sum_{t=1}^T r\big(Y_t, \varphi_t(S_t, H^t) \big) \big]$.

\subsection{Static state MDP and multi-armed bandit problem}


A \emph{static state} MDP is an MDP whose transition kernel $\kappa_\textnormal{trans}$ is such that the system's state remains constant, i.e. $ S_t = S$ for all $t$. We will use the notation $\Pi$ to refer to such MDP. 

A \emph{multi-armed bandit} (MAB) problem can be formalized as a static state MDP whose environment parameters $\Theta$ and outcomes $Y_t$ are independent of $S$. Similarly to an MDP, the task in a MAB problem is to learn a policy $\varphi = \lbrace \varphi_t: \cS \times \cH^t \to \cA \rbrace_{t=1}^T$ taking an action $A_t$ based on the past collected data $H^t$, where $H_{t+1} = (A_{t}, R_{t})$ that maximizes the cumulative expected reward $r_\Pi(\varphi) \coloneqq \bE \big[ \sum_{t=1}^T r\big(Y_t, \varphi_t(S,H^t) \big) \big]$.


A variant of that problem is the \emph{online optimization problem with partial feedback} studied by Russo and Van Roy~\cite{russo2016information}. This problem can also be modeled as a static MDP $\Pi$ with a finite action space $\cA$ where at each time $t=1,\ldots,T$, the agent selects an action $A_t$ and observes a ``per-action outcome" $Y_{t,A_t} \in \cY'$ giving rise to past collected data $H^t$ with $H_{t+1} = (A_{t}, Y_{t,A_t})$.
The agent associates the ``per-action outcome" with a reward $R_t = r'(Y_{t,A_t})$ through a preference function $r': \cY' \to \bR$. In this setting,  the random outcome $Y_t \in \cY$  is the vector formed with all the possible outcomes, $Y_t \equiv \lbrace Y_{t,a}\rbrace_{a\in \cA}$ and the reward function $r:\cY \times \cA \to \bR$ is a function such that for all $Y_t \in \cY$ and $A_t\in \cA$, we have $r(Y_t,A_t) = r'(Y_{t,A_t})$. In this problem, as well, the environment parameters $\Theta$ and outcomes $Y_t$ are independent of $S$.


\subsection{The Bayesian Cumulative Reward}
\label{subsec:BCR}

A decision policy that maximizes the expected cumulative reward among all policies is called a \emph{Bayesian decision policy}. The corresponding maximum expected cumulative reward is defined as the \emph{Bayesian cumulative reward}.

\begin{definition}
\label{def:bcr}
The \emph{Bayesian cumulative reward} (BCR) of a Markov decision process $\Phi$ is defined as $R_\Phi \coloneqq \sup_{\varphi} r_\Phi(\varphi)$, where the supremum is taken over the collection $\varphi$ of all decision rules $\varphi_t: \cS \times \cH^t \to \cA$ such that the expectation is defined.
\end{definition}

The notion of Bayesian cumulative reward can be generalized to allow the agent to select an action using some knowledge $X^t$ such that each $X_{t+1}$ is obtained from $(S_{t}, A_{t}, Y_{t}, \Theta)$. In this generalized model, the knowledge $X_t$ is obtained through a knowledge kernel $\kappa_\textnormal{know}: \ccX \times (\cS \times \cA \times \cY \times \cO) \to [0,1]$ such that $\bP_{X_{t+1} | S_{t}, A_{t}, Y_{t}, \Theta } = \kappa_{\textnormal{know}}(\cdot, (S_{t}, A_{t}, Y_{t}, \Theta))$. Now, let $\varphi = \lbrace \varphi_t: \cS \times \cX^t \to \cA \rbrace_{t=1}^T$ be a policy in this relaxed setting. 
Then, the \emph{generalized Bayesian cumulative reward} (also written as BCR when no confusion is possible) of an MDP $\Phi$ with knowledge kernel $\kappa_\textnormal{know}$ is $R_\Phi(\kappa_\textnormal{know}) \coloneqq \sup_\varphi r_\Phi(\kappa_\textnormal{know},\varphi)$, where 
\begin{align*}
     r_\Phi(\kappa_\textnormal{know},\varphi) \coloneqq  \bE \bigg[ \sum_{t=1}^T r \big( Y_t, \varphi_t(S_t, X^t) \big) \bigg] 
\end{align*}
and again the supremum is taken over the collection $\varphi$ of all decision rules $\varphi_t :  \cS \times \cX^t \to \cA$ such that the expectation above is defined.

\begin{remark}
Given an MDP $\Phi$, let $\cX = \cS \times \cA \times \bR$ and  $\kappa_\textnormal{know}$ be a kernel such that $X_{t+1} = (S_t, A_t, R_t)$ and denote this kernel $\kappa_\textnormal{H}$. Note that $X_t = H_t$ and $R_\Phi (\kappa_\textnormal{H} ) = R_\Phi$.
\end{remark}

After defining the generalized Bayesian cumulative reward, one can study the case where the agent has access to some processed information $Z_t$ obtained from the knowledge $X^t$. Let $\kappa_\textnormal{process}$ denote a collection of processing kernels $\lbrace \kappa_\textnormal{process,t}: \ccZ \times (\cX^t) \to [0,1] \rbrace_{t=1}^T$ such that $\bP_{Z_{t} |X^{t}} = \kappa_\textnormal{process,t}\big(\cdot, (X^{t})\big)$ for each $t=1,\ldots,T$. 
Then the \emph{processed Bayesian cumulative reward} with knowledge kernel $\kappa_\textnormal{know}$ and process kernels $\kappa_\textnormal{process}$ is $R_\Phi(\kappa_\textnormal{know},\kappa_\textnormal{process} ) \coloneqq \sup_{\psi} r_\Phi(\kappa_\textnormal{know},\kappa_\textnormal{process},\psi)$, where 
\begin{align*}
    r_\Phi(\kappa_\textnormal{know},\kappa_\textnormal{process},\psi) \coloneqq \bE \bigg[ \sum_{t=1}^T r \big( Y_t, \psi_t(S_t, Z_t) \big) \bigg]
\end{align*}
and the supremum is taken over the collection $\psi$ of all decision rules $\psi = \lbrace \psi_t: \cS \times \cZ \to \cA \rbrace_{t=1}^T$ such that the expectation above is defined.

\subsection{Data processing inequality for the BCR}
\label{subsec:dataprocess}

An important property of the Bayesian cumulative reward is the data processing inequality (DPI), stating that no amount of processing of the knowledge random variables can increase the cumulative reward. This is formalized in the following lemma. 
\begin{restatable}{lemma}{dataprocess}
\label{lem:data_process}

Let $\kappa_\textnormal{U}$ be a knowledge kernel associated with an MDP $\Phi$ and $\kappa_{\textnormal{V}|\textnormal{U}}$ a collection of processing kernels. Then, the cumulative Bayesian reward using the knowledge from $U$ is at least as large as the processed Bayesian cumulative reward using the processed knowledge from $V$. More precisely,  
\begin{equation*}
    R_\Phi (\kappa_\textnormal{U} ) \geq R_\Phi (\kappa_\textnormal{U},\kappa_{\textnormal{V}|\textnormal{U}} )
\end{equation*}
\end{restatable}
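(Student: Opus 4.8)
The plan is to show that every feasible \emph{processed} policy for the right-hand side can be emulated by a family of ordinary policies for the left-hand side, so that the supremum defining $R_\Phi(\kappa_\textnormal{U})$ dominates the reward attained by any processed policy. Fix an arbitrary processed policy $\psi = \lbrace \psi_t : \cS \times \cV \to \cA \rbrace_{t=1}^T$. The only obstacle to directly substituting $\psi$ into the $U$-based problem is that the processing kernels $\kappa_{\textnormal{V}|\textnormal{U}}$ may be randomized, whereas the decision rules competing in $R_\Phi(\kappa_\textnormal{U})$ are deterministic maps $\cS \times \cU^t \to \cA$. I would remove this obstacle by a noise-outsourcing representation: since the spaces involved are Polish, there exist measurable maps $f_t$ and an auxiliary random element $W$, independent of the environment $(\Theta, S, \lbrace Y_t \rbrace)$, such that $V_t = f_t(U^t, W)$ has the conditional law prescribed by $\kappa_{\textnormal{process},t}$.

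Conditioning on the realization $W = w$ then derandomizes the processing. For each $w$, define the deterministic decision rule $\varphi^{(w)}_t(s, u^t) \coloneqq \psi_t\big(s, f_t(u^t, w)\big)$, which is an admissible policy for the $U$-based problem. Because $W$ is independent of the environment, conditioning on $W = w$ does not alter the MDP dynamics, and the action sequence produced by running $\psi$ on the processed knowledge coincides, on the event $\lbrace W = w \rbrace$, with the one produced by $\varphi^{(w)}$ on the raw knowledge. Consequently the induced joint law of $(\lbrace S_t\rbrace, \lbrace A_t\rbrace, \lbrace Y_t\rbrace)$ matches as well, so the conditional expected cumulative reward equals $r_\Phi(\kappa_\textnormal{U}, \varphi^{(w)})$.

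Averaging over $W$ and bounding the average by the supremum would then complete the argument:
\begin{align*}
    r_\Phi(\kappa_\textnormal{U}, \kappa_{\textnormal{V}|\textnormal{U}}, \psi)
    &= \bE_W\big[ r_\Phi(\kappa_\textnormal{U}, \varphi^{(W)}) \big] \\
    &\leq \sup_w r_\Phi(\kappa_\textnormal{U}, \varphi^{(w)})
    \leq R_\Phi(\kappa_\textnormal{U}).
\end{align*}
Since $\psi$ was arbitrary, taking the supremum over all processed policies yields $R_\Phi(\kappa_\textnormal{U}, \kappa_{\textnormal{V}|\textnormal{U}}) \leq R_\Phi(\kappa_\textnormal{U})$, which is the claim.

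I expect the main technical care to lie in the noise-outsourcing step and in verifying that the trajectory distributions truly coincide for each fixed $w$; one must check that the feedback loop, in which actions influence the future knowledge $U_{t+1}$ drawn from $(S_t, A_t, Y_t, \Theta)$, is respected identically under $\psi$-with-processing and under $\varphi^{(w)}$. An alternative, slicker route, if the framework is read as permitting randomized decision rules, is to observe that $\psi$ composed with $\kappa_{\textnormal{V}|\textnormal{U}}$ is itself a randomized $U$-based policy, placing it directly in the feasible set of $R_\Phi(\kappa_\textnormal{U})$; the averaging argument above is the version that remains valid even when only deterministic rules are admitted.
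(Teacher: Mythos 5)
Your proof is correct and rests on the same two ingredients as the paper's own argument: the randomization (noise-outsourcing) lemma --- the paper invokes Kallenberg, Lemma~3.22 --- to represent the processed knowledge as $V_t = f_t(U^t,\Xi_t)$ with auxiliary uniform noise independent of everything else, and the elementary bound of an average over that noise by a supremum over its realizations. The difference is purely architectural. The paper proceeds by induction over time: at step $t$ it replaces the single rule $\psi_t(\cdot,V_t)$ by $\psi_t(\cdot,f_t(U^t,\xi_t))$, swaps the expectation over $\Xi_t$ for a supremum over $\xi_t$, and re-labels the downstream trajectory (the barred, tilded, and primed variables) so that the remaining terms keep the correct law; iterating over $t=1,\dots,T$ gives the claim. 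You instead outsource all the noise at once into $W=(\Xi_1,\dots,\Xi_T)$, condition on $W=w$ globally, emulate the entire processed policy by the deterministic family $\varphi^{(w)}_t(s,u^t)=\psi_t(s,f_t(u^t,w))$, and finish with $\bE_W[\,\cdot\,]\le\sup_w(\cdot)$ in one shot. Your organization is cleaner and avoids the paper's notational bookkeeping, but that bookkeeping reappears exactly where you flag it: one still needs an induction over $t$ showing that, conditionally on $W=w$, the closed-loop trajectory generated by $\psi$ with processing has the same law as the trajectory generated by $\varphi^{(w)}$; this holds because $W$ is independent of \emph{all} primitive randomness (not only $\Theta$, but also the noise driving the transition, outcome, and knowledge kernels), and it is precisely the content of the paper's step-by-step verification. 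Two small caveats: you need one fresh independent uniform per step so that $V_t$ is conditionally independent of the past given $U^t$ (the coupling implicit in the model's definition of the processing kernels), and your ``slicker'' alternative --- treating $\psi$ composed with $\kappa_{\textnormal{V}|\textnormal{U}}$ as a randomized $U$-based policy --- is not available here, since the paper's feasible set for $R_\Phi(\kappa_\textnormal{U})$ consists of deterministic maps $\varphi_t:\cS\times\cU^t\to\cA$; the averaging argument you give is the one that matches the framework.
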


\begin{proof}[Intuition of the proof]
The proof follows by iteratively employing~\cite[Lemma~3.22]{kallenberg2005probabilistic} in a similar fashion to~\cite[Lemma~1]{xu2020minimum} and taking care that the random objects in the definitions of $R_\Phi(\kappa_U)$ and $R_\Phi(\kappa_U,\kappa_{V|U})$ follow the distributions described by the dynamics of the MDP $\Phi$ and their respective actions $\varphi_t$ and $\psi_t$. The complete proof is in appendix~\ref{sec:proofs_lemas}.
\end{proof}

\subsection{The Minimum Bayesian Regret (MBR)}
\label{subsec:minimum_bayesian_regret}
%
We define the \emph{fundamental limit of the Bayesian cumulative reward} as the Bayesian cumulative reward for a knowledge kernel such that $X_t = \Theta$, that is when the environment parameters are known to the agent. We denote such a kernel as $\kappa_{\Theta}$.  


\begin{definition}
The \emph{fundamental limit of the Bayesian cumulative reward} of a Markov decision process $\Phi$ is defined as
\begin{align*}
    R_\Phi(\kappa_\Theta) \coloneqq \sup_{ \lbrace \psi_t \rbrace_{t=1}^T} \bE \bigg[ \sum_{t=1}^T r \big( Y_t, \psi_t(S_t, \Theta) \big) \bigg],
\end{align*}
where the kernel $\kappa_\Theta$ is such that $X_t = \Theta$ for all $t=1,\ldots,T$.
\label{def:fl_bcr}
\end{definition}
\begin{assumption}
For the rest of the paper, we will assume that the supremum from~\Cref{def:fl_bcr} exists and we will denote by  $\psi^\star = \lbrace \psi^\star_t \rbrace_{t=1}^T$ a policy that achieves it.
\end{assumption}

We define the gap between this limit and the Bayesian cumulative reward as the \emph{minimum Bayesian regret}. 
\begin{definition}
\label{def:mbr}
The \emph{minimum Bayesian regret (MBR)} of a Markov decision process $\Phi$ is defined as 
\begin{equation*}
    \textnormal{MBR}_\Phi \coloneqq R_\Phi (\kappa_{\Theta} ) - R_\Phi (\kappa_{\textnormal{H}} ).
\end{equation*}

\end{definition}
\textcolor{black}{
The MBR characterizes the regret of the optimal decision policy that has access to the collected data, but not to environment parameters, and is therefore an algorithm-independent quantity. It can be interpreted as the inherent difficulty of the reinforcement learning problem resulting from
the lack of knowledge about the environment parameters $\Theta$.
}
\section{Upper bounds on the Minimum Bayesian Regret}
\label{sec:upper_bounds}



In this section, we start by giving an upper bound of the minimum Bayesian regret in terms of the difference of the fundamental limit of the BCR, $R_\Phi(\kappa_\Theta)$, and the processed BCR with the optimal Bayes parameters' estimator $\bP_{\Theta|H^t}$ as the processing kernel and the optimal policy of $R_\Phi(\kappa_\Theta)$. That is, the difference between the best obtainable risk knowing the environment parameters $\Theta$, and the best obtainable risk inferring the parameters with an optimal estimator. This bound, in turn, can be developed into a bound that compares the sum of the individual terms in the optimal trajectory of $R_\Phi(\kappa_\Theta)$ and those obtained with the processing kernels $\bP_{\Theta|H^t}$. This way, we can employ similar techniques to those in the literature (e.g., \cite{xu2017information, rodriguez2021tighter}) and bound the MBR in terms of the sum of terms depending on the statistical difference between the distributions of those two trajectories.

\subsection{The Thompson sampling regret}

Consider the fundamental limit of BCR, $R_\Phi(\kappa_\Theta)$, and its optimal trajectory $\psi^\star$. A natural algorithm to try to solve an MDP $\Phi$ when environment $\Theta$ is unknown is to estimate the environment parameters with some processing kernel of the history $\kappa_{\Theta|\textnormal{H}}$ and select an optimal action based on such processing. An elegant scenario would be to have the additional information of knowing which is the optimal trajectory $\psi^\star$ and to be able to calculate the Bayes optimal estimator $\bP_{\Theta|H^t}$ to process the history. In fact, for a static MDP $\Pi$, this algorithm is studied in the literature and is known as the Thompson's sampling algorithm~
\textcolor{black}{~\cite{thompson1933likelihood,scott2010modern,chapelle2011empirical,may2012optimistic,osband2013more,russo2016information}}.
Therefore, the next lemma shows that the MBR is bounded from above by the difference of $R_\Phi(\kappa_\Theta)$ and the BCR of such an algorithm, $r_\Phi(\kappa_\textnormal{H}, \kappa_{\Theta|\textnormal{H}, \psi^\star})$.

\begin{lemma}
\label{lemma:thompson_sampling} For any MDP $\Phi$, the MBR can be upper bounded as follows, 
\begin{equation*}
\textnormal{MBR}_\Phi \leq R_\Phi(\kappa_\Theta) - r_\Phi(\kappa_\textnormal{H},\kappa_{\Theta|\textnormal{H}},\psi^\star).
\end{equation*}
\end{lemma}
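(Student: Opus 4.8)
The inequality to be established rearranges cleanly: since the MBR is \emph{defined} as $\textnormal{MBR}_\Phi = R_\Phi(\kappa_\Theta) - R_\Phi(\kappa_\textnormal{H})$ and the target right-hand side shares the leading term $R_\Phi(\kappa_\Theta)$, the claim is equivalent to
\begin{equation*}
R_\Phi(\kappa_\textnormal{H}) \;\geq\; r_\Phi(\kappa_\textnormal{H},\kappa_{\Theta|\textnormal{H}},\psi^\star).
\end{equation*}
So the plan is to show that the full-history BCR dominates the cumulative reward of the specific Thompson-sampling-type scheme that draws a posterior sample $Z_t \sim \bP_{\Theta|H^t}$ from the history and then plays the fundamental-limit-optimal rule $\psi^\star_t(S_t, Z_t)$. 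I would insert the intermediate object $R_\Phi(\kappa_\textnormal{H},\kappa_{\Theta|\textnormal{H}})$, the \emph{processed} BCR with history as knowledge and the Bayes posterior as the processing kernel, and prove the two-link chain
\begin{equation*}
R_\Phi(\kappa_\textnormal{H}) \;\geq\; R_\Phi(\kappa_\textnormal{H},\kappa_{\Theta|\textnormal{H}}) \;\geq\; r_\Phi(\kappa_\textnormal{H},\kappa_{\Theta|\textnormal{H}},\psi^\star).
\end{equation*}

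The first inequality is an immediate application of the data-processing inequality, Lemma~\ref{lem:data_process}, instantiated with the knowledge kernel $\kappa_\textnormal{U} = \kappa_\textnormal{H}$ (so $U_t = H_t$) and the processing kernel $\kappa_{\textnormal{V}|\textnormal{U}} = \kappa_{\Theta|\textnormal{H}}$, which produces $V_t = Z_t$ distributed as the Bayes posterior $\bP_{\Theta|H^t}$. The lemma exactly states that processing the knowledge cannot raise the cumulative reward, giving $R_\Phi(\kappa_\textnormal{H}) \geq R_\Phi(\kappa_\textnormal{H},\kappa_{\Theta|\textnormal{H}})$. The second inequality is simply the observation that the processed BCR is a supremum, $R_\Phi(\kappa_\textnormal{H},\kappa_{\Theta|\textnormal{H}}) = \sup_\psi r_\Phi(\kappa_\textnormal{H},\kappa_{\Theta|\textnormal{H}},\psi)$, so it dominates the value attained at any single admissible policy, in particular at $\psi = \psi^\star$.

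The one point that needs genuine care, and which I regard as the main obstacle, is checking that $\psi^\star$ is a \emph{feasible} policy in the processed optimization so that the final $\sup \geq$ step is legitimate. Here I would note that, by Definition~\ref{def:fl_bcr}, $\psi^\star$ consists of maps $\psi^\star_t : \cS \times \cO \to \cA$, while the processed policies are maps $\psi_t : \cS \times \cZ \to \cA$; one takes the processed alphabet to be $\cZ = \cO$, since the posterior sample $Z_t$ lives in the parameter space $\cO$, so $\psi^\star_t(S_t, Z_t)$ is exactly of the required type. I would also verify that $\kappa_{\Theta|\textnormal{H}}$ is a legitimate processing kernel in the sense of Section~\ref{subsec:BCR}, i.e.\ that each $Z_t$ is obtained from $X^t = H^t$ alone (which holds, as the posterior $\bP_{\Theta|H^t}$ is a function of the history), and that the governing Assumption guarantees the relevant expectations are defined.

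Finally, reversing the reduction, subtracting both sides of $R_\Phi(\kappa_\textnormal{H}) \geq r_\Phi(\kappa_\textnormal{H},\kappa_{\Theta|\textnormal{H}},\psi^\star)$ from $R_\Phi(\kappa_\Theta)$ yields the stated bound
\begin{equation*}
\textnormal{MBR}_\Phi = R_\Phi(\kappa_\Theta) - R_\Phi(\kappa_\textnormal{H}) \;\leq\; R_\Phi(\kappa_\Theta) - r_\Phi(\kappa_\textnormal{H},\kappa_{\Theta|\textnormal{H}},\psi^\star),
\end{equation*}
completing the argument. The proof is thus structurally short, its content being the correct bookkeeping of kernels and alphabets so that the single invocation of Lemma~\ref{lem:data_process} and the supremum bound apply verbatim.
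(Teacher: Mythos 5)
Your proof is correct and follows essentially the same route as the paper's: lower-bound $R_\Phi(\kappa_\textnormal{H})$ by $R_\Phi(\kappa_\textnormal{H},\kappa_{\Theta|\textnormal{H}})$ via the data-processing inequality (Lemma~\ref{lem:data_process}), then drop from the supremum to the particular policy $\psi^\star$. Your extra check that $\psi^\star$ is a feasible processed policy (taking $\cZ = \cO$) is a worthwhile bookkeeping detail the paper leaves implicit, but it does not change the argument.
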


\begin{proof}
The proof starts by using Lemma~\ref{lem:data_process} to lower bound $R_\Phi (\kappa_{\textnormal{H}} )$ with $R_\Phi (\kappa_{\textnormal{H}},\kappa_{\Theta|\textnormal{H}} ) $. The last inequality follows from the definition of $R_\Phi (\kappa_{\textnormal{H}},\kappa_{\Theta|\textnormal{H}} ) $ being the supremum over $\psi$ of $r_\Phi(\kappa_\textnormal{H},\kappa_{\Theta|\textnormal{H}},\psi)$. More precisely,
\begin{align*}
    \textnormal{MBR}_\Phi &= R_\Phi (\kappa_{\Theta} ) - R_\Phi (\kappa_{\textnormal{H}} )\\
     &\stackrel{}{\leq} R_\Phi (\kappa_{\Theta} )- R_\Phi (\kappa_{\textnormal{H}},\kappa_{\Theta|\textnormal{H}} )\\
     &\stackrel{}{\leq} R_\Phi(\kappa_\Theta) - r_\Phi(\kappa_\textnormal{H},\kappa_{\Theta|\textnormal{H}},\psi^\star).
\end{align*}

\end{proof}

In what follows, we will use the notations $Y^\star_t$ and $S^\star_t$ for the outcomes and states obtained from the actions derived from $\psi^\star$, the kernels that describe the MDP $\Phi$, and the knowledge kernel $\kappa_\Theta$.
Similarly we will let $\hat{Y}_t$, $\hat{S}_t$ and $\hat{H}_t$ be the outcomes, states, and histories obtained from the actions derived from $\psi^\star$, the kernels that describe the MDP $\Phi$ with knowledge kernel $\kappa_\textnormal{H}$, and processing kernels $\kappa_{\Theta|\textnormal{H}}$.  
The following lemma builds on Lemma~\ref{lemma:thompson_sampling} and shows how the MBR can be written as the sum of the individual differences of the expected rewards obtained following the optimal trajectory $(Y^\star_t, S^\star_t)_{t=1}^T$ and the trajectory of the aforementioned algorithm $(\hat{Y}_t, \hat{S}_t)_{t=1}^T$ given the history $\hat{H}^t$.

Unrolling $R_\Phi(\kappa_\Theta)$ and $r_\Phi(\kappa_\textnormal{H}, \kappa_{\textnormal{H}|\Theta}, \psi^\star)$ and using the linearity of the expectation and the law of total expectation reveals that the right-hand side term from Lemma~\ref{lemma:thompson_sampling} can be written as
\begin{equation}
    \sum_{t=1}^T \bE \bigg[ \bE \Big[ r\big(Y^\star_t, \psi^\star_t (S^\star_t, \Theta)\big) - r\big(\hat{Y}_t, \psi^\star_t (\hat{S}_t, \hat{\Theta}_t) \big)\Big| \Theta, \hat{\Theta}_t, \hat{H}^t \Big] \bigg].
    \label{eq:mdp_diff_expectations_thompson}
\end{equation}
\begin{remark}
\label{rem:diff_dists}
The importance of this re-formulation lays in the fact that the first term inside the conditional expectation is distributed according to $\bP_{Y^\star,S^\star|\Theta}$ since $(Y^\star,S^\star)$ are independent of the history $\hat{H}^t$ when the environment parameters $\Theta$ are known. Similarly, the second term is distributed according to $\bP_{\hat{Y}_t,\hat{S}_t|\hat{H}^t}$ since $(\hat{Y}_t,\hat{S}_t)$ are independent of the sampled parameters $\hat{\Theta}_t$ when the history is known. Both facts follow from the Markov chain $(Y^\star_t,S^\star_t) - \Theta - (\hat{Y}_t, \hat{S}_t) - \hat{H}^t - \hat{\Theta}_t$. Therefore, conditioned on the history $\hat{H}^t$ and the environment parameters $\Theta, \hat{\Theta}_t$, the terms in the sum of~\eqref{eq:mdp_diff_expectations_thompson} are a difference of expectations of random objects which randomness comes from distributions on the same space, which permits us to employ known decoupling techniques to bound these differences in terms of such distributions.
\end{remark}

In the sequel, we use this fact to bound the MBR as the sum of terms depending on the statistical difference between the distributions of the elements from the optimal trajectory $(Y^\star_t, S^\star_t)|\Theta$ and the trajectory described by the algorithm with the Bayes optimal parameters' estimator $(\hat{Y}_t, \hat{S}_t)|\hat{H}^t$. More precisely, we use the techniques from e.g.~\cite{russo2016information,xu2017information} when the reward is sub-Gaussian, from e.g.~\cite{rodriguez2021tighter,wang2019information} when it is Lipschitz, and from~\cite{rodriguez2021tighter} to connect both settings when the reward is bounded.

\subsection{Sub-Gaussian reward functions}
\label{subsec:subgaussian}

We consider arbitrary reward functions $r:\cY \times \cA \to \bR$ mapping an outcome and an action to a scalar reward. 
Under the assumption that the random reward $r(\hat{Y}_t,\psi_t^\star(\hat{S}_t,\theta))$ is $\sigma_t^2$-sub-Gaussian under $\bP_{\hat{Y}_t,\hat{S}_t|\hat{H}^t = \hat{h}^t}$ for all $\theta \in \cO$ and all $\hat{h}^t \in \cH^t$, the $\textnormal{MBR}_\Phi$ is bounded by a sum of terms related to the relative entropy between the distribution of the elements of each step of the optimal trajectory, i.e., $Y^\star_t, S^\star_t$, and the Thompson's sampled trajectory, i.e., $\hat{Y}_t, \hat{S}_t$. This is formalized in the following Proposition.

\begin{restatable}{proposition}{kldivboundsubgaussian}
\label{prop:kl_div_subgaussian}
If for all $t = 1,\ldots,T$, the random reward $r(\hat{Y}_t,\psi_t^\star(\hat{S}_t,\theta))$ is $\sigma_t^2$-sub-Gaussian under $\bP_{\hat{Y}_t,\hat{S}_t|\hat{H}^t = \hat{h}^t}$ for all $\theta \in \cO$ and all $\hat{h}^t \in \cH^t$, then, 
\begin{align*}
    \textnormal{MBR}_{\Phi} \leq \sum_{t=1}^T  \bE \Big[ \sqrt{2 \sigma_t^2 \KL{\bP_{Y^\star_t, S^\star_t|\Theta}}{\bP_{\hat{Y}_t,\hat{S}_t|\hat{H}^t}}}  \Big].
\end{align*}
\end{restatable}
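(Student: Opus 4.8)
The plan is to start from \Cref{lemma:thompson_sampling} together with its unrolled form~\eqref{eq:mdp_diff_expectations_thompson}, and to control each of the $T$ summands by a \emph{transportation} (decoupling) inequality. The workhorse will be the standard estimate that if a function $g$ is $\sigma^2$-sub-Gaussian under a distribution $\bQ$, then for any $\bP$ absolutely continuous with respect to $\bQ$,
\[
\bE_{\bP}[g] - \bE_{\bQ}[g] \leq \sqrt{2\sigma^2\, \KL{\bP}{\bQ}},
\]
which itself follows from the Donsker--Varadhan variational representation of $\KL{\bP}{\bQ}$ combined with the sub-Gaussian bound $\log \bE_{\bQ}[e^{\lambda(g-\bE_{\bQ}[g])}] \leq \lambda^2\sigma^2/2$ and optimization over $\lambda>0$. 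I will apply this conditionally, with $\bQ = \bP_{\hat Y_t,\hat S_t|\hat H^t=\hat h^t}$ (under which sub-Gaussianity is assumed for every parameter value) and $\bP = \bP_{Y^\star_t,S^\star_t|\Theta=\theta}$.

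The first task is to recast each summand of~\eqref{eq:mdp_diff_expectations_thompson} into a form where a \emph{single} integrand is integrated against the two distributions. By \Cref{rem:diff_dists}, conditioned on $(\Theta,\hat\Theta_t,\hat H^t)=(\theta,\hat\theta,\hat h^t)$ the first term is the expectation of $(y,s)\mapsto r(y,\psi^\star_t(s,\theta))$ under $\bP_{Y^\star_t,S^\star_t|\Theta=\theta}$ and the second is the expectation of $(y,s)\mapsto r(y,\psi^\star_t(s,\hat\theta))$ under $\bP_{\hat Y_t,\hat S_t|\hat H^t=\hat h^t}$. These two integrands do not yet coincide, since the policy is fed the true parameter $\theta$ in one case and the sampled parameter $\hat\theta$ in the other. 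The observation that removes this mismatch is the posterior-matching property of the Bayes-optimal processing kernel $\bP_{\Theta|H^t}$: conditioned on $\hat H^t$, the sampled parameter $\hat\Theta_t$ and the true parameter $\Theta$ share the same posterior law, so $(\Theta,\hat H^t)$ and $(\hat\Theta_t,\hat H^t)$ are equal in distribution. Combined with the conditional independence $\hat\Theta_t \indep (\hat Y_t,\hat S_t)\mid \hat H^t$ read off the Markov chain of \Cref{rem:diff_dists}, the outer expectation of the second term is unchanged if $\hat\Theta_t$ is replaced by $\Theta$ inside the policy; the first term depends only on $\Theta$, so $\hat\Theta_t$ can be dropped from its conditioning. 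Each summand then becomes $\bE_{\Theta,\hat H^t}\big[\bE_{\bP_{Y^\star_t,S^\star_t|\Theta}}[g_\Theta] - \bE_{\bP_{\hat Y_t,\hat S_t|\hat H^t}}[g_\Theta]\big]$ with the same integrand $g_\Theta:(y,s)\mapsto r(y,\psi^\star_t(s,\Theta))$.

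With the integrands matched, I would apply the transportation inequality inside the conditional expectation: for fixed $\theta$ and $\hat h^t$, the integrand $g_\theta$ is $\sigma_t^2$-sub-Gaussian under $\bP_{\hat Y_t,\hat S_t|\hat H^t=\hat h^t}$ by the hypothesis (which is stated for all $\theta\in\cO$, precisely so that it covers the random value taken by $\Theta$), so the conditional difference is at most $\sqrt{2\sigma_t^2\,\KL{\bP_{Y^\star_t,S^\star_t|\Theta=\theta}}{\bP_{\hat Y_t,\hat S_t|\hat H^t=\hat h^t}}}$. Taking the outer expectation over $(\Theta,\hat H^t)$ and summing over $t=1,\ldots,T$ then yields the claimed bound. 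The main obstacle is the middle step: justifying the replacement of $\hat\Theta_t$ by $\Theta$ so that the two expectations share one integrand, which is exactly where the posterior-matching property of Thompson sampling and the conditional independences encoded in the Markov chain $(Y^\star_t,S^\star_t)-\Theta-(\hat Y_t,\hat S_t)-\hat H^t-\hat\Theta_t$ must be invoked with care. The transportation inequality and the final summation are then routine.
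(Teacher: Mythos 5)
Your proposal is correct and follows essentially the same route as the paper's proof: starting from Lemma~\ref{lemma:thompson_sampling} and its unrolled form~\eqref{eq:mdp_diff_expectations_thompson}, conditioning on the history, using that $\hat{\Theta}_t$ and $\Theta$ share the same posterior law given $\hat{H}^t$ (together with the conditional independences of \Cref{rem:diff_dists}) to replace $\hat{\Theta}_t$ by $\Theta$ and match the integrands, and then applying the Donsker--Varadhan/sub-Gaussian transportation inequality conditionally before taking the outer expectation and summing over $t$. Your write-up even spells out the posterior-matching step that the paper dispatches in a single phrase, so no gap remains.
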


\begin{proof}
\textcolor{black}{The proof follows from applying Donsker-Varadhan's inequality~\cite[Theorem~5.2.1]{gray2011entropy} to~\eqref{eq:mdp_diff_expectations_thompson} using~\Cref{rem:diff_dists} in a similar fashion to~\cite{russo2016information,xu2017information}.}
\end{proof}

\subsection{Lipschitz reward functions}
\label{subsec:lipschitz}
In this subsection, we suppose that the set of outcomes and actions $(\cY,\cA)$ together with the metric $\rho:(\cY\times \cA) \times (\cY\times \cA)  \rightarrow \bR_+$, form a Polish metric space.

Assume that the reward function $r: \cY \times \cA \to \bR$ is $L$-Lipschitz under the metric $\rho$,  that is that $|r(y,a) - r(y',a')| \leq L \rho((y,a), (y',a'))$ for all $y,y' \in \cY$ and $a,a' \in \cA$. Under this assumption, the Wasserstein distance can be used to upper bound the minimum Bayesian regret.

\begin{restatable}{proposition}{wasserstein}
\label{prop:wasserstein}
Suppose that $(\cY\times\cA)$ is a metric space with metric $\rho$.  If the reward function $r: \cY \times \cA \to \bR$ is $L$-Lipschitz under the metric $\rho$, then
\begin{equation*}
      \textnormal{MBR}_{\Phi}  \leq L \sum_{t=1}^T  \bE \big[\bW(\bP_{Y^\star_t, S^\star_t|\Theta}, \bP_{\hat{Y}_t,\hat{S}_t|\hat{H}^t}) \big].
\end{equation*}
\end{restatable}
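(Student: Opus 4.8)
The plan is to mirror the proof of \Cref{prop:kl_div_subgaussian}, replacing the Donsker--Varadhan inequality by the Kantorovich--Rubinstein dual representation of the Wasserstein distance. By \Cref{lemma:thompson_sampling} and the reformulation in \eqref{eq:mdp_diff_expectations_thompson}, it suffices to bound, for each $t=1,\dots,T$, the inner conditional difference
\begin{equation*}
\bE\Big[ r\big(Y^\star_t,\psi^\star_t(S^\star_t,\Theta)\big)-r\big(\hat{Y}_t,\psi^\star_t(\hat{S}_t,\hat{\Theta}_t)\big)\,\Big|\,\Theta,\hat{\Theta}_t,\hat{H}^t\Big]
\end{equation*}
by $L\,\bW\big(\bP_{Y^\star_t,S^\star_t|\Theta},\bP_{\hat{Y}_t,\hat{S}_t|\hat{H}^t}\big)$, and then to sum over $t$ and take the outer expectation.

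First I would invoke \Cref{rem:diff_dists}, which identifies the two conditional laws on the common space $\cY\times\cS$: given $(\Theta,\hat{\Theta}_t,\hat{H}^t)$, the pair $(Y^\star_t,S^\star_t)$ is distributed as $\bP_{Y^\star_t,S^\star_t|\Theta}$ and $(\hat{Y}_t,\hat{S}_t)$ as $\bP_{\hat{Y}_t,\hat{S}_t|\hat{H}^t}$. The difference is thus a difference of expectations of the reward under two distributions, but evaluated at different parameter arguments, $\Theta$ in the first and $\hat{\Theta}_t$ in the second. To obtain a single Lipschitz test function I would exploit the posterior-sampling identity: conditioned on $\hat{H}^t$, the true parameter $\Theta$ and the Thompson sample $\hat{\Theta}_t$ are identically distributed (both follow $\bP_{\Theta|H^t}$). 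Conditioning on $\hat{H}^t$ and renaming both as a common dummy $\theta\sim\bP_{\Theta|\hat{H}^t}$, each step reduces to $\bE_{\bP_{Y^\star_t,S^\star_t|\theta}}[g_\theta]-\bE_{\bP_{\hat{Y}_t,\hat{S}_t|\hat{H}^t}}[g_\theta]$ with the single integrand $g_\theta(\cdot,\cdot)=r\big(\cdot,\psi^\star_t(\cdot,\theta)\big)$.

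The central step is then the Kantorovich--Rubinstein duality $\bW(\bP,\bQ)=\sup_{f:\,\mathrm{Lip}(f)\le 1}\big(\bE_{\bP}[f]-\bE_{\bQ}[f]\big)$~\cite{villani2009optimal}. As $r$ is $L$-Lipschitz, $g_\theta$ is $L$-Lipschitz, so $g_\theta/L$ is admissible in the supremum, giving
\begin{equation*}
\bE_{\bP_{Y^\star_t,S^\star_t|\theta}}[g_\theta]-\bE_{\bP_{\hat{Y}_t,\hat{S}_t|\hat{H}^t}}[g_\theta]\le L\,\bW\big(\bP_{Y^\star_t,S^\star_t|\theta},\bP_{\hat{Y}_t,\hat{S}_t|\hat{H}^t}\big).
\end{equation*}
Taking the outer expectation over $(\theta,\hat{H}^t)$ and using that $(\theta,\hat{H}^t)$ has the same joint law as $(\Theta,\hat{H}^t)$ undoes the matching and reproduces $L\,\bE[\bW(\bP_{Y^\star_t,S^\star_t|\Theta},\bP_{\hat{Y}_t,\hat{S}_t|\hat{H}^t})]$; summing over $t$ closes the argument.

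I expect the main obstacle to be securing a common $L$-Lipschitz test function for the two expectations: the per-step difference mixes the reward at the true parameter with the reward at the sampled parameter, and only the posterior-sampling symmetry lets both expectations share $g_\theta$, as required by Kantorovich--Rubinstein. A secondary point requiring care is transferring the Lipschitz constant of $r$ on $\cY\times\cA$ to the composed map $g_\theta$ on $\cY\times\cS$ through the policy $\psi^\star_t$; this forces the metric on $\cY\times\cS$ underlying the Wasserstein distance to be chosen compatibly with $\rho$, which holds trivially in the static-state and bandit settings where the state carries no information.
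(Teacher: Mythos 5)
Your proposal is correct and takes essentially the same route as the paper: \Cref{lemma:thompson_sampling} plus the reformulation \eqref{eq:mdp_diff_expectations_thompson}, the posterior-sampling renaming (conditioned on $\hat{H}^t$, $\Theta$ and $\hat{\Theta}_t$ are identically distributed and independent of the relevant trajectories, so both terms can share the single integrand $r(\cdot,\psi^\star_t(\cdot,\theta))$ with the trajectories drawn from $\bP_{Y^\star_t,S^\star_t|\Theta}$ and $\bP_{\hat{Y}_t,\hat{S}_t|\hat{H}^t}$), and then the Kantorovich--Rubinstein duality applied per step before taking the outer expectation and summing over $t$ --- exactly the argument the paper invokes via \Cref{rem:diff_dists}. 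Your closing caveat, that the Wasserstein distance on $\cY\times\cS$ requires a metric under which $(y,s)\mapsto r\big(y,\psi^\star_t(s,\theta)\big)$ remains $L$-Lipschitz, points to a genuine looseness in the paper's own statement (which never specifies that metric) rather than a gap in your argument, and flagging it is warranted.
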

\begin{proof}
\textcolor{black}{The proof follows from applying  Kantorovich–Rubinstein duality~\cite[Remark~6.5]{villani2009optimal} to~\eqref{eq:mdp_diff_expectations_thompson} using~\Cref{rem:diff_dists} analogously  to~\cite{rodriguez2021tighter,wang2019information}.}
\end{proof}

\subsection{Bounded reward functions}
\label{subsec:bounded}

We can obtain upper bounds on the minimum Bayesian regret for bounded reward functions as particular cases of both Proposition~\ref{prop:kl_div_subgaussian} and Proposition~\ref{prop:wasserstein}. We will consider without loss of generality reward functions bounded in $[0,1]$.

First,  from Hoeffding's lemma~\cite[Theorem~1]{hoeffding1994probability},  we have that if $r:\cY \times \cA \to [0,1]$ then the reward is $1/4$-sub-Gaussian under any distribution of the arguments.  This fact and Proposition~\ref{prop:kl_div_subgaussian} leads to Corollary~\ref{cor:kl_div_bounded_reward}.

\begin{restatable}{corollary}{kldivboundboundedcor}
\label{cor:kl_div_bounded_reward}
If the reward function is bounded in $[0,1]$,  then,  for any MDP $\Phi$,
\begin{align*}
    \textnormal{MBR}_{\Phi}&\leq  \sum_{t=1}^T \bE \bigg[\sqrt{\frac{1}{2} \KL{\bP_{Y^\star_t, S^\star_t|\Theta}}{\bP_{\hat{Y}_t,\hat{S}_t|\hat{H}^t}}} \bigg].
\end{align*}
\end{restatable}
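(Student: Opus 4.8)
The plan is to obtain this statement directly from Proposition~\ref{prop:kl_div_subgaussian} by exhibiting an explicit sub-Gaussian parameter for bounded rewards, exactly as the sentence preceding the corollary anticipates. First I would recall that any random variable supported on a bounded interval is sub-Gaussian, which is the content of Hoeffding's lemma~\cite[Theorem~1]{hoeffding1994probability}: a random variable $X$ taking values in $[a,b]$ satisfies $\bE[e^{\lambda(X-\bE X)}] \leq e^{\lambda^2 (b-a)^2/8}$ for all $\lambda \in \bR$, which is precisely the statement that $X$ is $(b-a)^2/4$-sub-Gaussian. Specializing to $[0,1]$ gives $(b-a)^2/4 = 1/4$, so any $[0,1]$-valued random variable is $1/4$-sub-Gaussian, and crucially this holds regardless of the underlying distribution of its arguments, since Hoeffding's bound depends only on the support.

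Second, I would verify that the hypothesis of Proposition~\ref{prop:kl_div_subgaussian} is met with $\sigma_t^2 = 1/4$ for every $t$. Because $r$ maps into $[0,1]$, for any fixed $\theta \in \cO$ and any fixed history $\hat{h}^t \in \cH^t$ the random reward $r(\hat{Y}_t, \psi_t^\star(\hat{S}_t, \theta))$ is a $[0,1]$-valued random variable under $\bP_{\hat{Y}_t, \hat{S}_t | \hat{H}^t = \hat{h}^t}$, hence $1/4$-sub-Gaussian under that conditional law by the previous step. This is exactly the uniform (in $\theta$ and $\hat{h}^t$) sub-Gaussianity assumption required by the proposition, so the proposition applies with the single constant $\sigma_t^2 = 1/4$ at every time step.

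Finally, I would substitute $\sigma_t^2 = 1/4$ into the bound of Proposition~\ref{prop:kl_div_subgaussian}, using $2\sigma_t^2 = 1/2$, to obtain the claimed inequality term by term inside the sum and the expectation. There is no genuine obstacle here; the argument is a clean specialization of an already-established result. The only points deserving care are the bookkeeping of the variance proxy—confirming that the factor is $1/4$ (rather than the $1/8$ appearing in Hoeffding's exponent) so that $2\sigma_t^2$ collapses exactly to $1/2$—and checking that the sub-Gaussianity holds for \emph{every} $\theta$ and \emph{every} $\hat{h}^t$ uniformly, as the proposition demands, which here follows automatically from the distribution-independence of Hoeffding's lemma.
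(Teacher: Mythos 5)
Your proof is correct and follows exactly the paper's own argument: invoke Hoeffding's lemma to conclude that a $[0,1]$-valued reward is $1/4$-sub-Gaussian under any distribution, then specialize Proposition~\ref{prop:kl_div_subgaussian} with $\sigma_t^2 = 1/4$ so that $2\sigma_t^2 = 1/2$. Your extra care in verifying uniformity over $\theta$ and $\hat{h}^t$ is a welcome, if implicit, part of the paper's reasoning as well.
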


Second,  we can note that a bounded $[0,1]$ function is $1$-Lispchitz under the discrete metric (or Hamming distortion) $\rho((y,a),(y',a')) \coloneqq \mathbbm{1}_{(y,a)=(y',a')}$ where $\mathbbm{1}$ is the indicator function.  Using this fact,  we can obtain Corollary~\ref{cor:wasserstein} from Proposition~\ref{prop:wasserstein}. 

\begin{restatable}{corollary}{wassersteincor}
\label{cor:wasserstein}
If the reward function is bounded in $[0,1]$,  then,  for any MDP $\Phi$,
\begin{equation*}
      \textnormal{MBR}_{\Phi}  \leq  \sum_{t=1}^T  \bE \big[\bW(\bP_{Y^\star_t, S^\star_t|\Theta}, \bP_{\hat{Y}_t,\hat{S}_t|\hat{H}^t}) \big].
\end{equation*}
\end{restatable}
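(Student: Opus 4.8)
The plan is to derive Corollary~\ref{cor:wasserstein} as a direct specialization of Proposition~\ref{prop:wasserstein}. The key observation, as flagged in the paragraph preceding the statement, is that any reward function bounded in $[0,1]$ is $1$-Lipschitz with respect to the discrete metric $\rho((y,a),(y',a')) \coloneqq \mathbbm{1}_{(y,a) \neq (y',a')}$ (Hamming distortion). Indeed, for distinct arguments we have $|r(y,a) - r(y',a')| \leq 1 = 1 \cdot \rho((y,a),(y',a'))$ since both values lie in $[0,1]$, and for identical arguments both sides are zero; hence the Lipschitz inequality holds with constant $L = 1$. I would first verify that $(\cY \times \cA)$ equipped with the discrete metric forms a Polish metric space, which is automatic since the discrete metric is always a valid metric and renders the space complete and separable whenever the underlying set is countable (or, more generally, the discrete topology is Polish on any countable set); this licenses the use of the Wasserstein machinery.

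With these two facts in place, I would simply invoke Proposition~\ref{prop:wasserstein} with $L = 1$ and the discrete metric $\rho$. Substituting $L = 1$ into
\begin{equation*}
      \textnormal{MBR}_{\Phi}  \leq L \sum_{t=1}^T  \bE \big[\bW(\bP_{Y^\star_t, S^\star_t|\Theta}, \bP_{\hat{Y}_t,\hat{S}_t|\hat{H}^t}) \big]
\end{equation*}
immediately yields the claimed inequality. The only subtlety worth making explicit is that the Wasserstein distance $\bW = \bW_1$ appearing in the conclusion is now computed with respect to this particular discrete metric, so the bound is the total-variation-type coupling cost rather than one tied to a geometric metric on the outcome space.

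The main obstacle, such as it is, is conceptual rather than technical: one must confirm that the discrete metric is admissible within the hypotheses of Proposition~\ref{prop:wasserstein}, i.e., that it indeed makes $(\cY \times \cA)$ a Polish space so that the Kantorovich–Rubinstein duality underlying the proposition applies. Beyond checking that the discrete metric is Polish on the relevant outcome-action space, the remainder is an immediate substitution with no nontrivial calculation. I would therefore keep the proof to a single sentence or two, stating the Lipschitz observation and the resulting application of Proposition~\ref{prop:wasserstein}, exactly in the spirit of how Corollary~\ref{cor:kl_div_bounded_reward} is obtained from Proposition~\ref{prop:kl_div_subgaussian} via Hoeffding's lemma.
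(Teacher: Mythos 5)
Your proof is correct and takes essentially the same route as the paper: observe that a $[0,1]$-bounded reward is $1$-Lipschitz under the discrete (Hamming) metric and invoke Proposition~\ref{prop:wasserstein} with $L=1$. Incidentally, your definition $\rho((y,a),(y',a')) = \mathbbm{1}_{(y,a)\neq(y',a')}$ silently corrects a typo in the paper, which writes the indicator with equality rather than inequality; your added remark on Polishness of the discrete metric space is a reasonable (if minor) point the paper glosses over.
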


\begin{remark}
\label{rem:wasserstein_is_tighter}
Corollary~\ref{cor:wasserstein} provides a tighter bound than Corollary~\ref{cor:kl_div_bounded_reward}. Indeed, if the geometry
is ignored (i.e., the discrete metric is considered), then for all $t=1,\ldots,T$,
\begin{align*}
    \bE \big[\bW(\bP_{Y^\star_t, S^\star_t|\Theta},& \bP_{\hat{Y}_t,\hat{S}_t|\hat{H}^t}) \big]\\
    &= \bE \big[\TV(\bP_{Y^\star_t, S^\star_t|\Theta}, \bP_{\hat{Y}_t,\hat{S}_t|\hat{H}^t}) \big]\\
    &\leq \bE \bigg[ \sqrt{\frac{1}{2}\KL{\bP_{Y^\star_t, S^\star_t|\Theta}}{\bP_{\hat{Y}_t,\hat{S}_t|\hat{H}^t}}} \bigg],
\end{align*}
where the equality follows from~\cite[Proof of Theorem~6.15]{villani2009optimal} and inequality follows from Pinsker’s~\cite[Theorem~6.5]{polyanskiy2014lecture} and Bretagnolle–Huber’s result~\cite[Proof of Lemma~2.1]{bretagnolle1978estimation}.
\end{remark}

\section{Upper bounds for static MDPs}
\label{sec:upper_bounds_MAB}

In this section, we leverage the bound from Section~\ref{sec:upper_bounds} to obtain bounds on the minimum Bayesian regret for static Markov decision processes. We focus here on the case where the reward function is bounded in $[0,1]$ and leave the sub-Gaussian and Lipschitz cases to the Appendix~\ref{sec:static_mdps_subg_lip}, since they are analogous to the previous section. We first present upper bounds on the MBR for the multi-armed bandit problem. We then produce upper bounds to the online optimization with partial feedback problem, and show how they can recover \emph{from below} the results from Russo and Van Roy~\cite{russo2016information}.

Similarly to Section~\ref{sec:upper_bounds}, we can apply Lemma~\ref{lemma:thompson_sampling} to upper bound the MBR for static MDPs. In the case of a static MDP, the right-hand side of that bound can be written as
\begin{align*}
    \sum_{t=1}^T \bE \bigg[ \bE \Big[ r\big(Y_t, \psi^\star_t (S, \Theta)\big) - r\big(Y_t, \psi^\star_t (S, \hat{\Theta}_t) \big)\Big| \Theta, \hat{\Theta}_t, \hat{H}^t \Big] \bigg].
\end{align*}
This rewriting of the bound is obtained the same way as ~\eqref{eq:mdp_diff_expectations_thompson}: unrolling $R_\pi(\kappa_\Theta)$ and $r_\Pi(\kappa_\textnormal{H}, \kappa_{\textnormal{H}|\Theta}, \psi^\star)$, using the linearity of the expectation, the law of total expectation and the fact that the state $S$ does not depend on the time $t=1,\ldots,T$.

In the case where the outcomes $\lbrace Y_t \rbrace_{t=1,\ldots,T}$ do not depend on the state $S$, as in a MAB problem, it is possible to rewrite the actions taken by optimal policy $\psi^\star_t(S,\Theta)$ as $\gamma^\star(\Theta)$, where the function $\gamma^\star: \cO \to \cA$ is such that for all $S \in \cS$ and all $\Theta \in \cO$, it holds that $\psi^\star_t(S,\Theta) = \gamma^\star(\Theta)$.
In that case, it comes that the right-hand side term from Lemma 2 can be written as
\begin{equation}
    \sum_{t=1}^T \bE  \Big[ \bE  \big[ r(Y_t,A^\star) -  r(Y_t,\hat{A}_t)\big]|A^\star,\hat{A}_t,\hat{H}^t \Big].
    \label{eq:smdp_diff_expectations_thompson}
\end{equation}

\begin{remark}
\label{rem:smdp_diff_dists}
Under this reformulation, the outcome in the first term inside the conditional expectation is distributed according to $\bP_{Y_t|A^\star,\hat{H}^t}$ and the second term is distributed according to $\bP_{Y_t|\hat{H}^t}$. This happens since $Y_t$ is independent of the sampled environment parameters $\hat{\Theta}_t$, and therefore independent of the sampled action $\hat{A}_t$ when the history $\hat{H}^t$ is known. 
\end{remark}



\subsection{Multi-armed bandit problem}

In this subsection, we propose minimum Bayesian regret bounds for multi-armed bandit problems $\Pi$. 

The tightest bound we obtain relates the $\textnormal{MBR}_\Pi$ to the  Wasserstein distance between the conditional probability of the outcome given the optimal action and the history collected following a Thompson sampling policy, and the conditional probability of the outcome given only the history.

\begin{restatable}{proposition}{MABwassersteinBounded}
\label{prop:MABwassersteinBounded}
If the reward function is bounded in $[0,1]$,  then for any static MDP $\Pi$,
\begin{equation*}
      \textnormal{MBR}_{\Pi}  \leq \sum_{t=1}^T  \bE \big[\bW(\bP_{Y_t|A^\star,\hat{H}^t}, \bP_{Y_t|\hat{H}^t}) \big].
\end{equation*}
\end{restatable}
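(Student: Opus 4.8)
The plan is to start from the static-MDP reformulation in~\eqref{eq:smdp_diff_expectations_thompson}, which already follows from Lemma~\ref{lemma:thompson_sampling}, and to bound each summand separately by the claimed Wasserstein distance. Fixing a time step $t$ and conditioning on the history $\hat{H}^t$ via the law of total expectation, I would write the summand as $\bE[r(Y_t,A^\star)|\hat{H}^t] - \bE[r(Y_t,\hat{A}_t)|\hat{H}^t]$. By~\Cref{rem:smdp_diff_dists}, the outcome in the first expectation is drawn from $\bP_{Y_t|A^\star,\hat{H}^t}$, while the outcome in the second is drawn from $\bP_{Y_t|\hat{H}^t}$, since $Y_t$ is independent of $\hat{A}_t$ given $\hat{H}^t$.

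The key ingredient I would invoke next is the defining property of the Thompson-sampling processing kernel $\kappa_{\Theta|\textnormal{H}} = \bP_{\Theta|H^t}$: conditioned on $\hat{H}^t$, the sampled parameters $\hat{\Theta}_t$ and the true parameters $\Theta$ are identically distributed (both $\bP_{\Theta|\hat{H}^t}$), and hence so are the induced actions, $\bP_{A^\star|\hat{H}^t} = \bP_{\hat{A}_t|\hat{H}^t}$, because $A^\star = \gamma^\star(\Theta)$ and $\hat{A}_t = \gamma^\star(\hat{\Theta}_t)$. This equidistribution lets me re-index both expectations over a common action law $\bP_{A^\star|\hat{H}^t}$, so that after averaging over the action the summand becomes $\sum_a \bP_{A^\star|\hat{H}^t}(a)\big[ \int r(y,a)\, d\bP_{Y_t|A^\star=a,\hat{H}^t}(y) - \int r(y,a)\, d\bP_{Y_t|\hat{H}^t}(y) \big]$, i.e.\ a difference of expectations of the \emph{same} function $r(\cdot,a)$ under two outcome distributions.

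With the two expectations now aligned on a common function, I would conclude exactly as in the bounded-reward case of~\Cref{cor:wasserstein}: since $r$ takes values in $[0,1]$, each map $r(\cdot,a)$ is $1$-Lipschitz under the discrete metric, so Kantorovich–Rubinstein duality~\cite[Remark~6.5]{villani2009optimal} bounds the inner difference by $\bW(\bP_{Y_t|A^\star=a,\hat{H}^t}, \bP_{Y_t|\hat{H}^t})$. Averaging over $a \sim \bP_{A^\star|\hat{H}^t}$ and then over $\hat{H}^t$ yields $\bE[\bW(\bP_{Y_t|A^\star,\hat{H}^t}, \bP_{Y_t|\hat{H}^t})]$ for each $t$, and summing over $t$ gives the claim.

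I expect the main obstacle to be the second step rather than the analytic bounding. In~\eqref{eq:smdp_diff_expectations_thompson} the two reward terms carry \emph{different} actions, $A^\star$ versus $\hat{A}_t$, so one cannot directly read off a Wasserstein distance between two outcome laws for a fixed function. The whole argument hinges on the posterior-sampling identity $\bP_{A^\star|\hat{H}^t} = \bP_{\hat{A}_t|\hat{H}^t}$ to pair the actions, and I would need to state this property carefully, justifying that under $\kappa_{\Theta|\textnormal{H}}$ the sampled $\hat{\Theta}_t$ shares the posterior $\bP_{\Theta|\hat{H}^t}$, mirroring the decoupling used by Russo and Van Roy~\cite{russo2016information}. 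Everything downstream is then a routine application of~\Cref{rem:smdp_diff_dists} together with the $1$-Lipschitz/discrete-metric observation underlying~\Cref{cor:wasserstein}.
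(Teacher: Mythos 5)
Your proof is correct and follows essentially the same route as the paper's: the paper's one-line proof applies Kantorovich--Rubinstein duality to~\eqref{eq:smdp_diff_expectations_thompson} via~\Cref{rem:smdp_diff_dists} in the style of~\cite{rodriguez2021tighter,wang2019information}, which is exactly the argument you spell out. Your explicit use of the posterior-matching identity $\bP_{A^\star|\hat{H}^t} = \bP_{\hat{A}_t|\hat{H}^t}$ to align the two reward terms on a common action law is precisely the decoupling step the paper leaves implicit (it appears in the paper only as the renaming of $\hat{\Theta}_t$ by $\Theta$ given $\hat{H}^t$ in its derivations), so you have filled in, rather than deviated from, the intended proof.
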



\begin{proof}
\textcolor{black}{The proof follows from applying  Kantorovich–Rubinstein duality~\cite[Remark~6.5]{villani2009optimal} to~\eqref{eq:smdp_diff_expectations_thompson} using~\Cref{rem:smdp_diff_dists} in the same way as~\cite{rodriguez2021tighter,wang2019information}.}
\end{proof}

Using the same arguments as in Remark~\ref{rem:wasserstein_is_tighter}, together with Jensen's inequality, one can relax the bound from Proposition~\ref{prop:MABwassersteinBounded} and relate the $\textnormal{MBR}_\Pi$ to the conditional mutual information between the outcome $Y_t$ and the optimal action $A^\star$ given the history $\hat{H}^t$. This is formalized in the following corollary.

\begin{restatable}{corollary}{MABbounded}
\label{cor:MAB_bounded}
If the reward function is bounded in $[0,1]$,  then for any static MDP $\Pi$,
\begin{align*}
    \textnormal{MBR}_{\Pi}    &\leq \sum_{t=1}^T \sqrt{\frac{1}{2} \mi(Y_t;A^\star|\hat{H}^t)}.
\end{align*}

\end{restatable}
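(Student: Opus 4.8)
The plan is to begin from the Wasserstein bound of Proposition~\ref{prop:MABwassersteinBounded} and relax each summand by replaying the chain of inequalities from Remark~\ref{rem:wasserstein_is_tighter}, with one extra application of Jensen's inequality to commute the expectation with the square root. Because the reward is bounded in $[0,1]$, the metric underlying Proposition~\ref{prop:MABwassersteinBounded} is the discrete metric, under which the Wasserstein distance coincides with the total variation distance by~\cite[Proof of Theorem~6.15]{villani2009optimal}. Hence for each $t$,
\[
\bE\big[\bW(\bP_{Y_t|A^\star,\hat{H}^t},\bP_{Y_t|\hat{H}^t})\big]
= \bE\big[\TV(\bP_{Y_t|A^\star,\hat{H}^t},\bP_{Y_t|\hat{H}^t})\big],
\]
where the expectation runs over the conditioning pair $(A^\star,\hat{H}^t)$.

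Next I would apply Pinsker's inequality~\cite[Theorem~6.5]{polyanskiy2014lecture} inside the expectation to bound each total variation term by $\sqrt{\tfrac{1}{2}\KL{\bP_{Y_t|A^\star,\hat{H}^t}}{\bP_{Y_t|\hat{H}^t}}}$, and then invoke the concavity of the square root together with Jensen's inequality to pull the expectation inside the root, giving
\[
\bE\big[\TV(\bP_{Y_t|A^\star,\hat{H}^t},\bP_{Y_t|\hat{H}^t})\big]
\leq \sqrt{\tfrac{1}{2}\,\bE\big[\KL{\bP_{Y_t|A^\star,\hat{H}^t}}{\bP_{Y_t|\hat{H}^t}}\big]}.
\]

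The final and central step is to recognize the expected relative entropy as the conditional mutual information $\mi(Y_t;A^\star|\hat{H}^t)$. By Remark~\ref{rem:smdp_diff_dists}, $\bP_{Y_t|\hat{H}^t}$ is exactly the average of $\bP_{Y_t|A^\star,\hat{H}^t}$ over $A^\star$ given $\hat{H}^t$, so averaging $\KL{\bP_{Y_t|A^\star,\hat{H}^t}}{\bP_{Y_t|\hat{H}^t}}$ over $(A^\star,\hat{H}^t)$ yields $\mi(Y_t;A^\star|\hat{H}^t)$ through the standard disintegrated representation of conditional mutual information. Substituting this identity into the previous display and summing over $t=1,\ldots,T$ produces the claimed bound. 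I expect the only delicate point to be this last identification---verifying that $\bP_{Y_t|\hat{H}^t}$ is the correct reference measure so that the golden-formula form of $\mi(Y_t;A^\star|\hat{H}^t)$ applies---while the Pinsker and Jensen steps are routine.
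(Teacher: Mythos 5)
Your proposal is correct and follows essentially the same route as the paper: relax Proposition~\ref{prop:MABwassersteinBounded} via the arguments of Remark~\ref{rem:wasserstein_is_tighter} (Wasserstein equals total variation under the discrete metric, then Pinsker's inequality), apply Jensen's inequality to move the expectation inside the square root, and identify the averaged relative entropy $\bE\big[\KL{\bP_{Y_t|A^\star,\hat{H}^t}}{\bP_{Y_t|\hat{H}^t}}\big]$ as the conditional mutual information $\mi(Y_t;A^\star|\hat{H}^t)$. The ``delicate point'' you flag is indeed the only nontrivial identification, and it holds exactly as you argue since $\bP_{Y_t|\hat{H}^t}$ is the $A^\star$-average of $\bP_{Y_t|A^\star,\hat{H}^t}$ under the true joint law.
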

This conditional mutual information can be interpreted as the remaining ``amount of surprise about the output $Y_t$" after observing the history $\hat{H}^t$ that is removed when the optimal action $A^\star$ is revealed. 

\subsection{Online optimization with partial feedback problem}

In the special case of online optimization with partial feedback, the right-hand-side term from Lemma \ref{lemma:thompson_sampling} in ~\eqref{eq:smdp_diff_expectations_thompson} can be formulated in a compact form using the preference function:
\begin{equation}
\label{eq:oopf_diff_expectations_thompson}
    \sum_{t=1}^T \bE  \Big[  \bE  \big[ r'(Y_{t,A^\star})\big]-\bE \big[ r'(Y_{t,\hat{A}_t})\big]|A^\star,\hat{A}_t,\hat{H}^t \Big].
\end{equation}

\begin{remark}
\label{rem:oopf_diff_dists}
In this last rewriting, the outcome in the first term inside the conditional expectation is distributed according to $\bP_{Y_{t,A^\star}|A^\star,\hat{H}^t}$ and the second term is distributed according to $\bP_{Y_{t,\hat{A}_t}|\hat{H}^t}$. This holds since $Y_t$ is independent of the sampled environment parameters $\hat{\Theta}_t$, and therefore independent of the sampled action $\hat{A}_t$ when the history $\hat{H}^t$ is known. 
\end{remark}


As the terms in~\eqref{eq:oopf_diff_expectations_thompson} are a difference of expectations of random objects which randomness comes from distributions on the same space, we can upper bound the minimum Bayesian regret using the Wasserstein distance
in terms of such distributions following the techniques from~\cite{rodriguez2021tighter,wang2019information}. 
%
This is formalized in the following proposition.

\begin{restatable}{proposition}{RUSSOwassersteinBounded}
\label{prop:Russo_wasserstein_bounded}
If the reward function is bounded in $[0,1]$, then for any \emph{online optimization problem with partial feedback} $\Pi$,
\begin{equation*}
      \textnormal{MBR}_{\Pi}  \leq \sum_{t=1}^T  \bE \big[\bW(\bP_{Y_{t,A^\star}|A^\star,\hat{H}^t}, \bP_{Y_{t,A^\star}|\hat{H}^t})\big].
\end{equation*}
\end{restatable}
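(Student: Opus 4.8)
The plan is to follow the same route as for the multi-armed bandit case (Proposition~\ref{prop:MABwassersteinBounded}), starting from the compact rewriting~\eqref{eq:oopf_diff_expectations_thompson} of the bound in Lemma~\ref{lemma:thompson_sampling}. First I would fix a time $t$ and condition on the history $\hat{H}^t = \hat{h}^t$, so that the per-step contribution is the expectation over $A^\star, \hat{A}_t$ of the difference $\bE[r'(Y_{t,A^\star}) \mid A^\star, \hat{h}^t] - \bE[r'(Y_{t,\hat{A}_t}) \mid \hat{A}_t, \hat{h}^t]$; the first inner term does not depend on $\hat{A}_t$ and the second does not depend on $A^\star$. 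By Remark~\ref{rem:oopf_diff_dists}, the outcome in the first term is distributed according to $\bP_{Y_{t,A^\star}|A^\star,\hat{H}^t}$, while the outcome in the second term, using that the per-action outcome vector satisfies $Y_t \indep \hat{A}_t \mid \hat{H}^t$, is distributed according to the per-action history-marginal $\bP_{Y_{t,\hat{A}_t}|\hat{H}^t}$.

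The crucial structural input is the defining property of Thompson sampling: the sampled parameter $\hat{\Theta}_t$ is drawn from the posterior $\bP_{\Theta|\hat{H}^t}$, so the sampled action $\hat{A}_t = \gamma^\star(\hat{\Theta}_t)$ has the same conditional law as the true optimal action $A^\star = \gamma^\star(\Theta)$ given the history, that is $\bP_{\hat{A}_t|\hat{H}^t} = \bP_{A^\star|\hat{H}^t}$. I would use this to relabel the dummy action in the second term, so that both terms are indexed by the same distribution over the optimal action; the outcome distribution in the second term then becomes the per-action history-marginal evaluated at the optimal action, which is exactly what the statement denotes $\bP_{Y_{t,A^\star}|\hat{H}^t}$.

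With the two outcomes living on the common space $\cY'$ and $r'$ the same function in both terms, I would then invoke Kantorovich--Rubinstein duality~\cite[Remark~6.5]{villani2009optimal}. Because the reward is bounded in $[0,1]$, the preference function $r'$ is $1$-Lipschitz under the discrete (Hamming) metric, exactly as used for Corollary~\ref{cor:wasserstein}; hence, for each realization $A^\star = a$, the difference of the two integrals is at most $\bW(\bP_{Y_{t,a}|A^\star=a,\hat{h}^t}, \bP_{Y_{t,a}|\hat{h}^t})$. Averaging over $A^\star \sim \bP_{A^\star|\hat{H}^t}$ and over $\hat{H}^t$, then summing over $t = 1, \ldots, T$, yields the claimed bound, in the same manner as~\cite{rodriguez2021tighter,wang2019information}.

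The main obstacle, and the only place requiring care, is the bookkeeping in the second paragraph: one must jointly exploit the conditional independence $Y_t \indep \hat{A}_t \mid \hat{H}^t$ and the Thompson identity $\bP_{\hat{A}_t|\hat{H}^t} = \bP_{A^\star|\hat{H}^t}$, and be explicit that $\bP_{Y_{t,A^\star}|\hat{H}^t}$ here denotes the per-action marginal $\bP_{Y_{t,a}|\hat{H}^t}$ evaluated at $a = A^\star$ (equivalently, the outcome at a fresh copy of the optimal action drawn independently of the true outcome), rather than the genuine law of the random variable $Y_{t,A^\star}$; these differ precisely because $A^\star$ and the outcome vector are coupled through $\Theta$. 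Once this identification is pinned down, the remaining steps are a direct transcription of the bounded-reward Wasserstein argument used for the multi-armed bandit.
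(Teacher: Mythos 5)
Your proposal is correct and follows essentially the same route as the paper's proof, which simply applies Kantorovich--Rubinstein duality~\cite[Remark~6.5]{villani2009optimal} to~\eqref{eq:oopf_diff_expectations_thompson} using~\Cref{rem:oopf_diff_dists}. The bookkeeping you make explicit --- the Thompson-sampling identity $\bP_{\hat{A}_t|\hat{H}^t} = \bP_{A^\star|\hat{H}^t}$ and the reading of $\bP_{Y_{t,A^\star}|\hat{H}^t}$ as the per-action marginal evaluated at $A^\star$ rather than the true law of $Y_{t,A^\star}$ --- is precisely what the paper leaves implicit, so your argument matches it in substance.
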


\begin{proof}
\textcolor{black}{The proof follows from applying  Kantorovich–Rubinstein duality~\cite[Remark~6.5]{villani2009optimal} to~\eqref{eq:oopf_diff_expectations_thompson} using~\Cref{rem:oopf_diff_dists}.}
\end{proof}

This bound can also be relaxed following a similar procedure as Remark \ref{rem:wasserstein_is_tighter} to relate the $\textnormal{MBR}_\Pi$ to the relative entropy between the distribution of the ``per-action outcome" $Y_{t,A^\star}$ given the optimal action $A^\star$ and the history $\hat{H}^t$ and given the history only.  

\begin{restatable}{corollary}{RUSSObounded}
\label{cor:Russo_bounded}
If the reward function is bounded in $[0,1]$, then for any \emph{online optimization problem with partial feedback} $\Pi$,
\begin{align*}
    \textnormal{MBR}_{\Pi}    \leq  \sum_{t=1}^T \bE \bigg[\sqrt{\frac{1}{2} \KL{\bP_{Y_{t,A^\star}|A^\star,\hat{H}^t}}{\bP_{Y_{t,A^\star}|\hat{H}^t}}} \bigg].
\end{align*}
\end{restatable}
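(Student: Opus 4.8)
The plan is to relax the Wasserstein bound of Proposition~\ref{prop:Russo_wasserstein_bounded} into a relative-entropy bound, mirroring exactly the chain of inequalities established in Remark~\ref{rem:wasserstein_is_tighter} but applied to the conditional laws of the per-action outcome. First I would invoke Proposition~\ref{prop:Russo_wasserstein_bounded}, which already gives
\[
\textnormal{MBR}_{\Pi} \leq \sum_{t=1}^T \bE \big[ \bW(\bP_{Y_{t,A^\star}|A^\star,\hat{H}^t}, \bP_{Y_{t,A^\star}|\hat{H}^t}) \big].
\]
Since the preference function $r'$ takes values in $[0,1]$, it is $1$-Lipschitz under the discrete (Hamming) metric on $\cY'$, so the relevant Wasserstein distance is the one induced by that metric and the bound carries no extra Lipschitz constant.

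Next I would apply, for each fixed realization of the conditioning variables $(A^\star,\hat{H}^t)$, the two facts already used in Remark~\ref{rem:wasserstein_is_tighter}. The identity $\bW = \TV$ under the discrete metric~\cite[Proof of Theorem~6.15]{villani2009optimal} gives
\[
\bW(\bP_{Y_{t,A^\star}|A^\star,\hat{H}^t}, \bP_{Y_{t,A^\star}|\hat{H}^t}) = \TV(\bP_{Y_{t,A^\star}|A^\star,\hat{H}^t}, \bP_{Y_{t,A^\star}|\hat{H}^t}),
\]
and Pinsker's inequality~\cite[Theorem~6.5]{polyanskiy2014lecture} (equivalently the Bretagnolle–Huber bound~\cite[Proof of Lemma~2.1]{bretagnolle1978estimation}) then yields
\[
\TV(\bP_{Y_{t,A^\star}|A^\star,\hat{H}^t}, \bP_{Y_{t,A^\star}|\hat{H}^t}) \leq \sqrt{\frac{1}{2} \KL{\bP_{Y_{t,A^\star}|A^\star,\hat{H}^t}}{\bP_{Y_{t,A^\star}|\hat{H}^t}}}.
\]
Remark~\ref{rem:oopf_diff_dists} guarantees that these conditional objects are genuine probability distributions on $\cY'$, so the two facts apply realization-by-realization. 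Because the resulting inequality holds pointwise in $(A^\star,\hat{H}^t)$, it survives the outer expectation by monotonicity; substituting into the Wasserstein bound and summing over $t$ produces precisely the claimed inequality.

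The step that deserves the most care — though it is conceptual rather than technical — is keeping the outer expectation outside the square root. In contrast to the multi-armed bandit Corollary~\ref{cor:MAB_bounded}, where Jensen's inequality is invoked to pull the expectation inside and recover a conditional mutual information, here no such move is made: the relaxation is performed termwise on the random relative entropy $\KL{\bP_{Y_{t,A^\star}|A^\star,\hat{H}^t}}{\bP_{Y_{t,A^\star}|\hat{H}^t}}$ and the expectation over $(A^\star,\hat{H}^t)$ is left in place. I therefore do not anticipate a genuine obstacle; the only point to verify is that the discrete-metric identity and Pinsker's inequality are applied to the conditional laws pointwise, which is immediate.
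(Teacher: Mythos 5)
Your proposal is correct and follows essentially the same route as the paper: the paper obtains Corollary~\ref{cor:Russo_bounded} by relaxing the Wasserstein bound of Proposition~\ref{prop:Russo_wasserstein_bounded} via the procedure of Remark~\ref{rem:wasserstein_is_tighter} (the discrete-metric identity $\bW = \TV$ followed by Pinsker's inequality), applied pointwise in the conditioning variables $(A^\star,\hat{H}^t)$ with the outer expectation kept outside the square root. Your remark that, unlike Corollary~\ref{cor:MAB_bounded}, no Jensen step is taken here is likewise consistent with the paper's treatment.
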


As the above stated 
Proposition \ref{prop:Russo_wasserstein_bounded} is derived using Lemma \ref{lemma:thompson_sampling}, its bound naturally holds for the regret of the Thompson sampling algorithm, namely:
\begin{align*}
    R_\Pi(\kappa_\Theta) - &r_\Pi(\kappa_\textnormal{H},\kappa_{\Theta|\textnormal{H}},\psi^\star)\leq \nonumber \\
    &\sum_{t=1}^T  \bE \big[\bW(\bP_{Y_{t,A^\star}|A^\star,\hat{H}^t}, \bP_{Y_{t,A^\star}|\hat{H}^t})\big]
\end{align*}

We can further relax 
this bound to recover results from Russo and Van Roy~\cite{russo2016information}. More specifically, we can recover the general bound combining~\cite[Propositions~1 and~3]{russo2016information}, and the specific bound combining~\cite[Propositions~1 and~4]{russo2016information}, for which it is assumed that the outcome $Y_t$ is perfectly revealed upon observing $Y_{t,a}$ for any $a\in\cA$. These claims are formalized in Corollary \ref{cor:propositions_from_Russo}. 

\begin{restatable}{corollary}{RUSSOboundedCorollary}
\label{cor:propositions_from_Russo}
If the reward function is bounded in $[0,1]$, then for any \emph{online optimization problem with partial feedback} $\Pi$, we have the following inequality on the bound from Proposition \ref{prop:Russo_wasserstein_bounded}: 
\begin{align*}
   \sum_{t=1}^T  \bE \big[\bW(\bP_{Y_{t,A^\star}|A^\star,\hat{H}^t}, \bP_{Y_{t,A^\star}|\hat{H}^t})\big] \leq\sqrt{\frac{1}{2}  |\cA| \ent(A^\star) T  }.
\end{align*}
Under the additional assumption that the outcome $Y_t$ is perfectly revealed upon observing $Y_{t,a}$ for any $a\in\cA$, one can obtain a tighter result: 
\begin{align*}
    \sum_{t=1}^T  \bE \big[\bW(\bP_{Y_{t,A^\star}|A^\star,\hat{H}^t}, \bP_{Y_{t,A^\star}|\hat{H}^t})\big] \leq\sqrt{\frac{1}{2} \ent(A^\star) T  }.
\end{align*}

\end{restatable}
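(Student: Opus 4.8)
The plan is to relax the Wasserstein bound of \Cref{prop:Russo_wasserstein_bounded} step by step until it matches the information-ratio form of Russo and Van Roy, and then invoke their Propositions~3 and~4 to supply the two constants $|\cA|$ and $1$.

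First, because the reward lies in $[0,1]$, the discrete metric renders $r'$ $1$-Lipschitz, so each Wasserstein term collapses to a total-variation term exactly as in \Cref{rem:wasserstein_is_tighter}, i.e. $\bE[\bW(\bP_{Y_{t,A^\star}|A^\star,\hat{H}^t},\bP_{Y_{t,A^\star}|\hat{H}^t})]=\bE[\TV(\bP_{Y_{t,A^\star}|A^\star,\hat{H}^t},\bP_{Y_{t,A^\star}|\hat{H}^t})]$. Applying Pinsker's inequality pointwise gives the right-hand side of \Cref{cor:Russo_bounded}, and then Jensen's inequality moves the expectation inside the square root, bounding the $t$-th summand by $\sqrt{\tfrac12\,\mi(Y_{t,A^\star};A^\star\,|\,\hat{H}^t)}$ through the identity $\bE_{A^\star,\hat{H}^t}[\KL{\bP_{Y_{t,A^\star}|A^\star,\hat{H}^t}}{\bP_{Y_{t,A^\star}|\hat{H}^t}}]=\mi(Y_{t,A^\star};A^\star\,|\,\hat{H}^t)$. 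A Cauchy--Schwarz step over $t=1,\dots,T$ then produces the single square root $\sqrt{\tfrac{T}{2}\sum_{t=1}^T \mi(Y_{t,A^\star};A^\star\,|\,\hat{H}^t)}$, so it only remains to control the summed mutual information.

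Here I would use the defining feature of the Thompson-sampling trajectory: conditionally on $\hat{H}^t$ the sampled action $\hat{A}_t$ is an independent copy of the optimum $A^\star$, both drawn from $\bP_{A^\star|\hat{H}^t}$. Since $\hat{H}_{t+1}=(\hat{A}_t,Y_{t,\hat{A}_t})$ and $\hat{A}_t$ carries no information about $A^\star$ on its own, the per-step information gain factorizes as $\mi(A^\star;\hat{H}_{t+1}\,|\,\hat{H}^t)=\sum_{a\in\cA}\bP_{A^\star|\hat{H}^t}(a)\,\mi(A^\star;Y_{t,a}\,|\,\hat{H}^t)$, and the chain rule supplies the budget $\sum_{t=1}^T \mi(A^\star;\hat{H}_{t+1}\,|\,\hat{H}^t)=\mi(A^\star;\hat{H}^{T+1})\le \ent(A^\star)$, since mutual information never exceeds the entropy of the discrete variable $A^\star$. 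The missing ingredient is a per-step comparison between the ``optimal-arm'' information and this gain, namely $\mi(Y_{t,A^\star};A^\star\,|\,\hat{H}^t)\le |\cA|\,\mi(A^\star;\hat{H}_{t+1}\,|\,\hat{H}^t)$ in general, sharpened to a factor $1$ when every $Y_{t,a}$ reveals the full outcome $Y_t$ (the full-feedback case). Substituting either factor together with the entropy budget into the single square root closes both inequalities.

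I expect this last comparison to be the main obstacle, and it is exactly the content of \cite[Propositions~3 and~4]{russo2016information}: the factor $|\cA|$ arises from a Cauchy--Schwarz over the $|\cA|$ actions weighted by the posterior combined with Pinsker, and the sharpening to $1$ from the equivalence of observing any $Y_{t,a}$ with observing $Y_t$ under full feedback. The subtlety is that this estimate is \emph{not} a purely distributional one: a bound $\mi(Y_{t,A^\star};A^\star\,|\,\hat{H}^t)\le|\cA|\,\mi(A^\star;\hat{H}_{t+1}\,|\,\hat{H}^t)$ is false for general joint laws (take each $Y_{t,a}$ independent of $A^\star$ yet with arm-dependent outcome law, so the gain vanishes while the selected-arm information does not), so the optimality of $A^\star$ — through the boundedness and the reward ordering that defines it — must be used essentially, precisely as in Russo and Van Roy.
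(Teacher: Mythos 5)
Your second inequality (the full-feedback, factor-$1$ case) is sound and is essentially the paper's own argument: under that assumption each $\KL{\bP_{Y_{t,a}|A^\star=a,\hat{H}^t}}{\bP_{Y_{t,a}|\hat{H}^t}}$ equals $\KL{\bP_{Y_{t}|A^\star=a,\hat{H}^t}}{\bP_{Y_{t}|\hat{H}^t}}$, so the posterior-weighted KL sum that your Jensen step produces coincides exactly with the information gain $\mi(A^\star;\hat{H}_{t+1}|\hat{H}^t)$ of the Thompson step, and Cauchy--Schwarz over $t$ plus the chain-rule budget $\sum_{t}\mi(A^\star;\hat{H}_{t+1}|\hat{H}^t)\le\ent(A^\star)$ finish the proof. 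The general $|\cA|$ inequality, however, has a genuine gap, located precisely at the step you flagged as the main obstacle. After your Jensen step the per-round quantity is $D_t:=\bE_{\hat{H}^t}\big[\sum_{a\in\cA}\bP(A^\star=a|\hat{H}^t)\KL{\bP_{Y_{t,a}|A^\star=a,\hat{H}^t}}{\bP_{Y_{t,a}|\hat{H}^t}}\big]$, and your plan requires the per-step comparison $D_t\le|\cA|\,\mi(A^\star;\hat{H}_{t+1}|\hat{H}^t)$. This inequality is false, and---contrary to your closing remark---the optimality of $A^\star$ does not repair it. Take two arms, arm $2$ with a known outcome law of mean $1/2$, arm $1$ with outcome a point mass at $1$ or at $0$ according to $\Theta$, and posterior $\bP(A^\star=1|\hat{H}^t)=\epsilon$. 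Then $D_t=\epsilon\log(1/\epsilon)$, while the information gain is $\epsilon\big(\epsilon\log\tfrac{1}{\epsilon}+(1-\epsilon)\log\tfrac{1}{1-\epsilon}\big)\approx\epsilon^2\log(1/\epsilon)$, so $D_t/\big(|\cA|\,\mi(A^\star;\hat{H}_{t+1}|\hat{H}^t)\big)\to\infty$ as $\epsilon\to0$, even though $A^\star$ is genuinely the reward-maximizing action.

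The reason citing~\cite[Propositions~3 and~4]{russo2016information} cannot fill this hole is that those propositions bound a strictly smaller quantity: the squared expected one-step regret, which after Pinsker is $\big(\sum_{a}\bP(A^\star=a|\hat{H}^t)\sqrt{\tfrac{1}{2}\KL{\bP_{Y_{t,a}|A^\star=a,\hat{H}^t}}{\bP_{Y_{t,a}|\hat{H}^t}}}\big)^2$; by Jensen this is at most $D_t/2$, and in the example above it is of order $\epsilon^2\log(1/\epsilon)\ll D_t$. The factor $|\cA|$ must therefore be extracted by Cauchy--Schwarz applied to the posterior-weighted sum of \emph{square roots} of KLs, i.e.\ \emph{before} any Jensen step that merges the average over $A^\star$ into the KL. That is exactly the ordering in the paper's proof: expand the outer expectation over $A^\star$ explicitly as in~\eqref{eq:cor_2_rem_3}, apply Cauchy--Schwarz over $a\in\cA$ to get the weights $\bP(A^\star=a|\hat{H}^t)^2$ and the factor $|\cA|$ as in~\eqref{eq:cor_2_cauchy}, add the nonnegative off-diagonal terms $\KL{\bP_{Y_{t,a}|A^\star=b,\hat{H}^t}}{\bP_{Y_{t,a}|\hat{H}^t}}$ to complete the information gain as in~\eqref{eq:cor_2_extra_terms}, and only then apply Jensen over $\hat{H}^t$, Cauchy--Schwarz over $t$, and Remark~\ref{rem:entropy_dominates}. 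A secondary slip: your identity $\bE[\KL{\bP_{Y_{t,A^\star}|A^\star,\hat{H}^t}}{\bP_{Y_{t,A^\star}|\hat{H}^t}}]=\mi(Y_{t,A^\star};A^\star|\hat{H}^t)$ conflates $D_t$ with the mutual information of the composed variable $Y_{t,A^\star}$, whose reference measure is the posterior mixture of the conditionals $\bP_{Y_{t,a}|A^\star=a,\hat{H}^t}$ rather than $\bP_{Y_{t,a}|\hat{H}^t}$; your own ``general joint laws'' counterexample only makes sense under that composed-variable reading, which is not the quantity your Pinsker step actually produces. Under full feedback all of these distinctions collapse, which is exactly why that half of your argument goes through.
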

\begin{proof}[Intuition of the proof]
For both results, the proof starts from using the same steps as Remark \ref{rem:wasserstein_is_tighter} to relax the bound from Proposition \ref{prop:Russo_wasserstein_bounded}. Then, both of the proofs rely on the application of Cauchy-Schwartz's and Jensen's inequalities to obtain a bound using the sum of the  conditional mutual information between the optimal action $A^\star$ and the “per-action outcome” $Y_{t,A_t}$ given the history $\hat{H}^t$. One can then show that the entropy of the optimal action $\ent(A^\star)$ upper bounds this sum of conditional mutual information $\mi(A^\star; Y_{t,A_t}|\hat{H}^t)$ to obtain the desired results. The assumption that the outcome $Y_t$ is perfectly revealed upon observing $Y_{t,a}$ for any $a\in\cA$ averts an extra use of the Cauchy-Schwartz inequality and thus allows to avoid an explicit dependence on the the number of actions through the multiplicative constant $\sqrt{|\cA|}$.
The full proof can be found in Appendix~\ref{sec:oopf_mdps_subg_lip}.
\end{proof}

\section{Conclusion}
\label{sec:conclusion}

In this paper, building on the results from~\cite{xu2020minimum}, we introduce a framework to study the Bayesian cumulative reward and the minimum Bayesian regret for reinforcement learning problems in the form of Markov decision process. The latter, is an algorithm-independent quantity and reflects the difficulty of the reinforcement learning problem.
We prove a data processing inequality for the Bayesian cumulative reward and present upper bounds on the minimum Bayesian regret using the Wasserstein distance and the relative entropy. We leverage these  results to the particular cases of the multi-armed bandit and the online optimization with partial feedback problems. For this last problem, our bound can be relaxed to recover from below the results presented in~\cite{russo2016information}.

\IEEEtriggeratref{12}
\bibliographystyle{IEEEtran}
\bibliography{refs}

\onecolumn
\appendices
\renewcommand*{\proofname}{Proof}

\section{Lemmas and extra remarks}
\label{sec:proofs_lemas}

\dataprocess*

\begin{proof}
The proof starts by writing explicitly the inequality to be proven, namely
\begin{equation*}
    R_\Phi(\kappa_U) = \sup_{ \lbrace \varphi_t \rbrace_{t=1}^T} \bE \bigg[ \sum_{t=1}^T r\big(Y_t, \varphi_t(S_t, U^t) \big) \bigg] \geq \sup_{ \lbrace \psi_t \rbrace_{t=1}^T} \bE \bigg[ \sum_{t=1}^T r\big(Y_t', \psi_t(S_t', V_t) \big) \bigg] = R_\Phi(\kappa_U, \kappa_{V|U}),
\end{equation*}
where $Y_t$ and $S_t$ are the outcomes and states obtained from the actions derived from $\varphi_t$, the kernels that describe the MDP $\Phi$, and $\kappa_U$; where $Y_t'$, $S_t'$, and $V_t$ are the outcomes, the states, and processed knowledge obtained from the actions derived from $\psi_t$ and the kernels that describe the MDP $\Phi$, and $\kappa_U$ and $\kappa_{V|U}$. 

Now, the proof follows by iteratively employing~\cite[Lemma~3.22]{kallenberg2005probabilistic} in a similar fashion to~\cite[Lemma~1]{xu2020minimum}. To start, consider the kernel $\kappa_{V|U,1}$ from $\cU$ to $\cV$, which are assumed to be Borel spaces. Then, there exists a measurable function $f_1: \cU \times [0,1] \to \cV$ such that if $\Xi \sim \textnormal{Uniform}[0,1]$ then $f(u,\Xi) \sim \kappa_{V|U,1}(\cdot,u)$ for all $u \in \cU$~\cite[Lemma~3.22]{kallenberg2005probabilistic}. Then,
\begin{align}
    R_\Phi(\kappa_U) &= \sup_{ \lbrace \varphi_t \rbrace_{t=1}^T} \bE \bigg[ r\big(Y_1', \varphi_1(S_1', U^1)) + \sum_{t=2}^T r\big(Y_t, \varphi_t(S_t, U^t) \big) \bigg] 
    \label{eq:prime_initial}
    \\
    &\geq \sup_{ \psi_1, \lbrace \varphi_t \rbrace_{t=2}^T} \sup_{\xi_1 \in [0,1]} \bE \bigg[ r\big(Y_1', \psi_1(S_1', f_1(U^1, \xi))) + \sum_{t=2}^T r\big(\bar{Y}_t, \varphi_t(\bar{S}_t, \bar{U}^t) \big) \bigg]
    \label{eq:func_change}
    \\ 
    &\geq \sup_{ \psi_1, \lbrace \varphi_t \rbrace_{t=2}^T} \bE \bigg[ r\big(Y_1', \psi_1(S_1', f_1(U^1, \Xi))) + \sum_{t=2}^T r\big(\tilde{Y}_t, \varphi_t(\tilde{S}_t, \tilde{U}^t) \big) \bigg] 
    \label{eq:func_change_exp}
    \\
    &= \sup_{ \psi_1, \lbrace \varphi_t \rbrace_{t=2}^T} \bE \bigg[ r\big(Y_1', \psi_1(S_1', V_1)) + r\big(Y_2', \varphi_2(S_2', {U'}^2)) + \sum_{t=3}^T r\big(\tilde{Y}_t, \varphi_t(\tilde{S}_t, \tilde{U}^t) \big) \bigg],
    \label{eq:prime_second}
\end{align}
where~\eqref{eq:prime_initial} follows since neither $S_1$ and $Y_1$ nor $S_1'$ and $Y_1'$ depend on the actions derived from $\varphi$ or $\psi$, and therefore $S_1, S_1' \sim \bP_{S|\Theta}$ and $Y_1,Y_1' \sim \bP_{Y_1|S_1,\Theta}$. Then,~\eqref{eq:func_change} follows since the supremum of functions $\varphi_1 : \cS \times \cU \to \cA$ is restricted to the functions $\sup_\psi \sup_\xi \psi(\cdot, f_1(\cdot, \xi))$, and where $\bar{Y}_t$, $\bar{S}_t$, and $\bar{U}$ denote the outcomes, states, and knowledge obtained from the actions derived from $\psi_1$ and $\varphi_t$ for $t \geq 2$, the kernels that describe the MDP $\Phi$, and $\kappa_U$. After that,~\eqref{eq:func_change_exp} holds since $\Xi$ is independent of all random objects and $\sup_x f(x) \geq \bE[f(X)]$. Here, $\tilde{Y}_t$, $\tilde{S}_t$, and $\tilde{U}^t$ denote the outcomes and states obtained from the actions derived from $\psi_1$ and $\varphi_t$ for $t \geq 2$, the kernels that describe the MDP $\Phi$, and $\kappa_U$. Finally, in~\eqref{eq:prime_second} it is used that $V_1 = f(U^1, \Xi)$ and therefore that $S_2' = \tilde{S}_2$ and $Y_2' = \tilde{Y}_2'$. Similarly, ${U'}^2$ is the knowledge obtained from $\psi_1$, the kernels that describe the MDP $\Phi$, and $\kappa_U$ and $\kappa_{V|U,1}$. 

Repeating the technique above focusing on $r(Y_2',\varphi_2(S_2',{U'}^2))$ and using~\cite[Lemma~3.22]{kallenberg2005probabilistic} with the kernel $\kappa_{V|U,2}$ from $\cU^2$ to $\cV$ one obtains that
\begin{align}
    R_\Psi(\kappa_U) \geq \sup_{ \lbrace \psi_t \rbrace_{t=1}^2, \lbrace \varphi_t \rbrace_{t=3}^T} \bE \bigg[ \sum_{t=1}^2 r\big(Y_t', \psi_1(S_t', V_t)) + r\big(Y_3', \varphi_3(S_3', {U'}^3)) + \sum_{t=4}^T r\big(\tilde{Y}_t, \varphi_t(\tilde{S}_t, \tilde{U}^t) \big) \bigg],
    \label{eq:prime_third}
\end{align}
where the notation is abused and $\tilde{Y}_t$ and $\tilde{S}_t$ denote the outcomes and states obtained from the actions derived from $\psi_1$, $\psi_2$, and $\varphi_t$ for $t \geq 3$, the kernels that describe the MDP $\Phi$, and $\kappa_U$ and $\kappa_{V|U,1}$ for $t < 3$. Also as before, ${U'}^3$ is the knowledge obtained from $\psi_t$ for $t < 3$, the kernels that describe the MDP $\Phi$, and $\kappa_U$ and $\kappa_{V|U,t}$ for $t < 3$. 

Finally, iterating this technique results in 
\begin{equation}
    R_\Phi(\kappa_U) \geq \sup_{ \lbrace \psi_t \rbrace_{t=1}^T} \bE \bigg[ \sum_{t=1}^T r\big(Y_t', \psi_t(S_t', V_t) \big) \bigg] = R_\Phi(\kappa_U, \kappa_{V|U})
    \label{eq:prime_final}
\end{equation}
and completes the proof.
\end{proof}

\begin{remark}
In the proof, $\psi_1$ may be different in both~\eqref{eq:prime_second} and~\eqref{eq:prime_third}, since the supremum may vary. However, $Y_2'$, $S_2'$, and ${U'}^2$ still represent the outcome, the state, and the knowledge obtained from the action derived from $\psi_1$, the kernels that describe the MDP, and $\kappa_U$ and $\kappa_{V|U,1}$. The same is true for all $Y_t'$, $S_t'$, and ${U'}^t$ along the proof, ensuring that the random objects in~\eqref{eq:prime_final} are distributed as in the definition of $R_\Phi(\kappa_U,\kappa_{V|U})$.
\end{remark}

\section{Upper bounds for static MDPs for sub-Gaussian and Lipschitz losses}
\label{sec:static_mdps_subg_lip}

\subsection{Multi-armed bandit problems}
\label{sec:mab_mdps_subg_lip}

\begin{restatable}{proposition}{MABsubgaussian}
\label{prop:MAB_sugaussian}
If for all $t = 1,\ldots,T$, the random reward $r(Y,A^\star)$ is $\sigma_t^2$-sub-Gaussian under $\bP_{Y_t|\hat{H}^t = \hat{h}^t}$ for all $\theta \in \cO$ and all $\hat{h}^t \in \cH^t$, then for any static MDP $\Pi$,
\begin{align*}
    \textnormal{MBR}_{\Pi}    &\leq \sum_{t=1}^T \sqrt{2 \sigma_t^2 \mi(Y_t;A^\star|\hat{H}^t)}.
\end{align*}

\end{restatable}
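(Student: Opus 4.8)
The plan is to carry out, at the level of the per-step summands of the static-MDP reformulation~\eqref{eq:smdp_diff_expectations_thompson}, the same Donsker--Varadhan argument used to prove \Cref{prop:kl_div_subgaussian}, and then to relax the resulting relative entropy into a conditional mutual information, exactly as in the passage from \Cref{prop:MABwassersteinBounded} to \Cref{cor:MAB_bounded}.

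First I would invoke \Cref{lemma:thompson_sampling} together with the reformulation~\eqref{eq:smdp_diff_expectations_thompson}, reducing the task to bounding
\[
\sum_{t=1}^T \bE \Big[ \bE \big[ r(Y_t,A^\star) - r(Y_t,\hat{A}_t) \big] \,\big|\, A^\star,\hat{A}_t,\hat{H}^t \Big].
\]
The crucial preparatory step is a \emph{decoupling}. In a MAB we have $A^\star = \gamma^\star(\Theta)$ and $\hat{A}_t = \gamma^\star(\hat{\Theta}_t)$ with $\hat{\Theta}_t$ drawn from the posterior $\bP_{\Theta|\hat{H}^t}$, so $A^\star$ and $\hat{A}_t$ share the same conditional law given $\hat{H}^t$; combining this with \Cref{rem:smdp_diff_dists}, in which $Y_t$ is independent of the sampled action given $\hat{H}^t$, lets me rewrite each summand as
\[
\bE \Big[ \bE_{\bP_{Y_t|A^\star,\hat{H}^t}}\!\big[ r(Y_t,A^\star) \big] - \bE_{\bP_{Y_t|\hat{H}^t}}\!\big[ r(Y_t,A^\star) \big] \Big],
\]
where the outer expectation is over $A^\star$ and $\hat{H}^t$. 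This is now a difference of expectations of the \emph{same} random function $r(\cdot,A^\star)$ taken against the two distributions $\bP_{Y_t|A^\star,\hat{H}^t}$ and $\bP_{Y_t|\hat{H}^t}$ on the common outcome space $\cY$.

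Next, on this decoupled difference I would apply the Donsker--Varadhan inequality~\cite[Theorem~5.2.1]{gray2011entropy}: since $r(Y,A^\star)$ is $\sigma_t^2$-sub-Gaussian under $\bP_{Y_t|\hat{H}^t=\hat{h}^t}$, the gap between its mean under $\bP_{Y_t|A^\star,\hat{H}^t}$ and its mean under $\bP_{Y_t|\hat{H}^t}$ is at most $\sqrt{2\sigma_t^2\,\KL{\bP_{Y_t|A^\star,\hat{H}^t}}{\bP_{Y_t|\hat{H}^t}}}$, yielding
\[
\textnormal{MBR}_{\Pi} \leq \sum_{t=1}^T \bE \Big[ \sqrt{2\sigma_t^2\,\KL{\bP_{Y_t|A^\star,\hat{H}^t}}{\bP_{Y_t|\hat{H}^t}}} \Big].
\]
Finally, I would pull the outer expectation inside the concave square root by Jensen's inequality and use the identity $\bE\big[\KL{\bP_{Y_t|A^\star,\hat{H}^t}}{\bP_{Y_t|\hat{H}^t}}\big] = \mi(Y_t;A^\star|\hat{H}^t)$ to obtain the stated bound. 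The step I expect to be the main obstacle is the bookkeeping in the decoupling: one must justify carefully, from the posterior-matching of $A^\star$ and $\hat{A}_t$ and the Markov structure recorded in \Cref{rem:smdp_diff_dists}, that the second expectation may legitimately be written with the same function $r(\cdot,A^\star)$ under the marginal $\bP_{Y_t|\hat{H}^t}$, since only then does the sub-Gaussian decoupling produce a relative entropy between two measures on $\cY$ rather than a difference involving two different functions.
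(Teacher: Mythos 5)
Your proposal is correct and follows essentially the same route as the paper's proof: reduce via \Cref{lemma:thompson_sampling} to~\eqref{eq:smdp_diff_expectations_thompson}, apply Donsker--Varadhan using the distributional identifications of \Cref{rem:smdp_diff_dists}, and then conclude with Jensen's inequality and the identity $\bE\big[\KL{\bP_{Y_t|A^\star,\hat{H}^t}}{\bP_{Y_t|\hat{H}^t}}\big] = \mi(Y_t;A^\star|\hat{H}^t)$. The only difference is that you make explicit the Thompson-sampling posterior-matching decoupling (that $A^\star$ and $\hat{A}_t$ share the same law given $\hat{H}^t$), a step the paper leaves implicit by citing the arguments of Russo--Van Roy and Xu--Raginsky.
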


\begin{proof}

The proof starts from applying Donsker-Varadhan's inequality~\cite[Theorem~5.2.1]{gray2011entropy} to~\eqref{eq:smdp_diff_expectations_thompson} using~\Cref{rem:smdp_diff_dists} in the same way as~\cite{russo2016information,xu2017information}. The last inequality is obtained using Jensen's inequality and identifying the conditional mutual information between the outcome $Y_t$ and the optimal action $A^\star$ given the history $\hat{H}^t$. Namely, 
\begin{align*}
    \textnormal{MBR}_\Pi \leq \sum_{t=1}^T \bE  \Big[ \bE  \big[ r(Y_t,A^\star) -  r(Y_t,\hat{A}_t)\big]|A^\star,\hat{A}_t,\hat{H}^t \Big]
    \leq \sum_{t=1}^T \bE  \Big[ \sqrt{2\sigma^2_t\KL{\bP_{Y_t|A^\star,\hat{H}^t}}{\bP_{Y_t|\hat{H}^t})}} \Big]
    \leq \sum_{t=1}^T \sqrt{2 \sigma_t^2 \mi(Y_t;A^\star|\hat{H}^t)}.
\end{align*}
\end{proof}

\begin{restatable}{proposition}{MABwasserstein}
\label{prop:MABwasserstein}
Suppose that $\cY$ is a metric space with metric $\rho$.  If the reward function $r: \cY \times \cA \to \bR$ is $L$-Lipschitz in $\cY$ under the metric $\rho$, then
\begin{equation*}
      \textnormal{MBR}_{\Pi}  \leq L \sum_{t=1}^T \bE\big[\bW(\bP_{Y_{t}|A^\star}, \bP_{Y_{t}|\hat{H}^t})\big].
\end{equation*}
\end{restatable}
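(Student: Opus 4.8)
The plan is to follow the same template as the sub-Gaussian bandit bound (Proposition~\ref{prop:MAB_sugaussian}) and the bounded-reward Wasserstein bound (Proposition~\ref{prop:MABwassersteinBounded}), but replacing the relative-entropy step with the Kantorovich–Rubinstein duality and using \emph{only} Lipschitzness of $r$ in its outcome argument. First I would invoke Lemma~\ref{lemma:thompson_sampling} to bound $\textnormal{MBR}_\Pi$ by the Thompson-sampling gap, which for a static MDP with state-independent outcomes is the sum in~\eqref{eq:smdp_diff_expectations_thompson} of per-step conditional expectations of $r(Y_t,A^\star)-r(Y_t,\hat A_t)$ given $A^\star,\hat A_t,\hat H^t$. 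By Remark~\ref{rem:smdp_diff_dists}, inside each summand the outcome attached to the optimal action is distributed as $\bP_{Y_t|A^\star,\hat H^t}$ while the outcome attached to the sampled action is distributed as $\bP_{Y_t|\hat H^t}$, because $Y_t$ is conditionally independent of $\hat A_t$ given $\hat H^t$.

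The difficulty that distinguishes this Lipschitz-in-$\cY$ bound from the bounded (discrete-metric) case is that the two terms carry \emph{different} actions, $A^\star$ versus $\hat A_t$, whereas the hypothesis only controls the variation of the map $y\mapsto r(y,a)$ for a fixed action $a$. To circumvent this I would add and subtract the term $\bE_{\bP_{Y_t|\hat H^t}}[r(Y_t,A^\star)]$ inside each summand, splitting the gap into an \emph{outcome-mismatch} part $\bE_{\bP_{Y_t|A^\star,\hat H^t}}[r(Y_t,A^\star)]-\bE_{\bP_{Y_t|\hat H^t}}[r(Y_t,A^\star)]$, in which the action is held fixed at $A^\star$, and an \emph{action-mismatch} residual $\bE_{\bP_{Y_t|\hat H^t}}[r(Y_t,A^\star)]-\bE_{\bP_{Y_t|\hat H^t}}[r(Y_t,\hat A_t)]$. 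The residual vanishes once the outer conditional expectation given $\hat H^t$ is taken, because the defining property of Thompson sampling, $\hat A_t=\gamma^\star(\hat\Theta_t)$ with $\hat\Theta_t\sim\bP_{\Theta|\hat H^t}$, makes $A^\star$ and $\hat A_t$ identically distributed given $\hat H^t$, while the outcome law $\bP_{Y_t|\hat H^t}$ does not depend on the action.

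For the surviving outcome-mismatch term, with the action now fixed, I would apply the Kantorovich–Rubinstein duality~\cite[Remark~6.5]{villani2009optimal}: since $r(\cdot,a)$ is $L$-Lipschitz under $\rho$ for every $a$,
\begin{equation*}
\bE_{\bP_{Y_t|A^\star,\hat H^t}}[r(Y_t,A^\star)]-\bE_{\bP_{Y_t|\hat H^t}}[r(Y_t,A^\star)]\le L\,\bW(\bP_{Y_t|A^\star,\hat H^t},\bP_{Y_t|\hat H^t}).
\end{equation*}
Taking the expectation over $A^\star$ and $\hat H^t$ and summing over $t$ then yields $L\sum_{t=1}^T\bE[\bW(\bP_{Y_t|A^\star,\hat H^t},\bP_{Y_t|\hat H^t})]$, which is the claimed bound (and the Lipschitz-in-$\cY$ analogue of Proposition~\ref{prop:MABwassersteinBounded}, serving as a consistency check). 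The main obstacle I anticipate is the second paragraph: making the posterior-matching cancellation rigorous, in particular justifying the conditional independence $Y_t\perp\hat A_t\mid\hat H^t$ and the identity $A^\star\stackrel{d}{=}\hat A_t\mid\hat H^t$ underlying it, and ensuring measurability of $a\mapsto\bW(\bP_{Y_t|A^\star=a,\hat H^t},\bP_{Y_t|\hat H^t})$ so that the per-action bound can be averaged when $\cY$ is a general Polish space.
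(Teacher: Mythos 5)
Your proof is correct and follows the same route as the paper's own (one-sentence) proof: Lemma~\ref{lemma:thompson_sampling}, the static-MDP rewriting~\eqref{eq:smdp_diff_expectations_thompson}, \Cref{rem:smdp_diff_dists}, and Kantorovich--Rubinstein duality. Your main addition is making explicit the posterior-matching cancellation---the add-and-subtract of $\bE_{\bP_{Y_t|\hat{H}^t}}[r(Y_t,A^\star)]$ combined with $\hat{A}_t \stackrel{d}{=} A^\star$ given $\hat{H}^t$ and $Y_t \indep \hat{A}_t \mid \hat{H}^t$---which the paper leaves implicit behind ``analogously to'' its references; this is precisely the step that makes a hypothesis of Lipschitzness \emph{only in $\cY$} usable, and it is the same step hidden in the appendix proof of Proposition~\ref{prop:MAB_sugaussian}.

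One discrepancy you should not gloss over: your argument yields $L\sum_{t=1}^T \bE\big[\bW(\bP_{Y_t|A^\star,\hat{H}^t},\bP_{Y_t|\hat{H}^t})\big]$, whereas the proposition as printed has $\bP_{Y_t|A^\star}$, with no conditioning on $\hat{H}^t$, in the first argument; these are different quantities, so your claim that this ``is the claimed bound'' is not literally true. Your version is the one consistent with \Cref{rem:smdp_diff_dists} and with the bounded-reward analogue (Proposition~\ref{prop:MABwassersteinBounded}); the paper's own conditioning is in fact inconsistent across the appendix (Proposition~\ref{prop:Russo_sugaussian} conditions on $\Theta$, Proposition~\ref{prop:Russo_wasserstein} on $A^\star$ alone), so the printed statement may well be a notational slip. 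If, however, you want the inequality exactly as stated, only a small modification of your decomposition is needed: condition the optimal-action term on $A^\star$ alone rather than on $(A^\star,\hat{H}^t)$. Concretely, by the tower property and the same posterior-matching identity, the per-step regret also equals
\begin{equation*}
\bE\Big[\bE_{\bP_{Y_t|A^\star}}\big[r(Y_t,A^\star)\big]-\bE_{\bP_{Y_t|\hat{H}^t}}\big[r(Y_t,A^\star)\big]\Big],
\end{equation*}
where the outer expectation is over the joint law of $(A^\star,\hat{H}^t)$, and applying Kantorovich--Rubinstein duality pointwise in $(A^\star,\hat{H}^t)$ to the fixed $L$-Lipschitz function $r(\,\cdot\,,A^\star)$ gives $L\,\bE\big[\bW(\bP_{Y_t|A^\star},\bP_{Y_t|\hat{H}^t})\big]$. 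Both relaxations are valid upper bounds on the same per-step regret; neither follows from the other by a simple convexity argument, since both arguments of the Wasserstein distance vary with the conditioning variables.
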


\begin{proof}
The proof follows from applying  Kantorovich–Rubinstein duality~\cite[Remark~6.5]{villani2009optimal} to~\eqref{eq:smdp_diff_expectations_thompson} using~\Cref{rem:smdp_diff_dists} analogously  to~\cite{rodriguez2021tighter,wang2019information}.
\end{proof}

\subsection{Online optimization with partial feedback problems}
\label{sec:oopf_mdps_subg_lip}

\begin{restatable}{proposition}{RUSSOsubgaussian}
\label{prop:Russo_sugaussian}
If for all $t = 1,\ldots,T$, the random reward $r(Y,a^\star)$ is $\sigma_t^2$-sub-Gaussian under $\bP_{Y|\hat{H}^t = \hat{h}^t}$ for all $a^\star \in \cA$ and all $\hat{h}^t \in \cH^t$, then for any \emph{online optimization problem with partial feedback} $\Pi$,
\begin{align*}
    \textnormal{MBR}_{\Pi}    &\leq \sum_{t=1}^T \bE \Big[\sqrt{2 \sigma_t^2  \KL{\bP_{Y_{t,A^\star}|\Theta}}{\bP_{Y_{t,A^\star}|\hat{H}^t}}}\Big] 
\end{align*}

\end{restatable}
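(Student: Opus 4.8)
The plan is to transplant the argument behind the general-MDP bound of \Cref{prop:kl_div_subgaussian} to the per-action-outcome structure of the partial-feedback problem, replacing the joint outcome--state pair by the single per-action outcome $Y_{t,A^\star}\in\cY'$ that the preference function $r'$ acts on. Following \Cref{prop:MAB_sugaussian}, I would first apply \Cref{lemma:thompson_sampling} to bound $\textnormal{MBR}_{\Pi}$ by $R_\Pi(\kappa_\Theta)-r_\Pi(\kappa_\textnormal{H},\kappa_{\Theta|\textnormal{H}},\psi^\star)$, and then unroll this difference into the per-step form \eqref{eq:oopf_diff_expectations_thompson}, so that it remains to control, for each $t$, a conditional difference of the shape $\bE[r'(Y_{t,A^\star})]-\bE[r'(Y_{t,\hat{A}_t})]$. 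By \Cref{rem:oopf_diff_dists}, since $Y_t$ is independent of the sampled parameters $\hat{\Theta}_t$ (hence of $\hat{A}_t$) once $\hat{H}^t$ is fixed, the second expectation is taken under the posterior-predictive law $\bP_{Y_{t,A^\star}|\hat{H}^t}$, whereas conditioning the first expectation on the true parameters $\Theta$ (on which the optimal trajectory depends, and from which it is independent of $\hat{H}^t$) places it under $\bP_{Y_{t,A^\star}|\Theta}$.

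The step I expect to need the most care is lining the two expectations up as integrals of the \emph{same} function $r'$ against two laws on the \emph{common} space $\cY'$. This rests on the defining property of Thompson sampling: conditioned on $\hat{H}^t$, the sampled parameters $\hat{\Theta}_t$ and the true parameters $\Theta$ share the posterior law $\bP_{\Theta|\hat{H}^t}$. As in the renaming step of \Cref{prop:kl_div_subgaussian}, this lets me rename $\hat{\Theta}_t$ by $\Theta$ in the policy (equivalently, replace the sampled action $\hat{A}_t$ by the optimal action $A^\star=\gamma^\star(\Theta)$ as the index selecting the per-action outcome) without changing the value of the expectation. After this rewriting both terms are expectations of $r'(Y_{t,A^\star})$, one under $\bP_{Y_{t,A^\star}|\Theta}$ and the other under $\bP_{Y_{t,A^\star}|\hat{H}^t}$.

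Finally, I would use the hypothesis that the reward $r'(Y_{t,A^\star})$ is $\sigma_t^2$-sub-Gaussian under the reference measure $\bP_{Y_{t,A^\star}|\hat{H}^t}$ and apply the Donsker--Varadhan variational inequality \cite[Theorem~5.2.1]{gray2011entropy} to each summand, bounding the conditional difference by $\sqrt{2\sigma_t^2\,\KL{\bP_{Y_{t,A^\star}|\Theta}}{\bP_{Y_{t,A^\star}|\hat{H}^t}}}$; taking the outer expectation and summing over $t$ delivers the stated bound. The sub-Gaussian tail estimate and the summation are routine, so the genuine content lies entirely in the correct identification of the two conditional laws and the Thompson-sampling symmetry that legitimizes the single-step decoupling.
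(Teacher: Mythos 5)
Your proposal is correct and follows essentially the same route as the paper: bound the MBR via Lemma~\ref{lemma:thompson_sampling}, unroll into the per-step form~\eqref{eq:oopf_diff_expectations_thompson}, use the distributional identification of Remark~\ref{rem:oopf_diff_dists} (including the Thompson-sampling renaming of $\hat{\Theta}_t$ by $\Theta$ conditioned on $\hat{H}^t$), and apply Donsker--Varadhan with the sub-Gaussian hypothesis to each summand. Your explicit emphasis on aligning both terms as expectations of the same function $r'$ over the common space $\cY'$ is exactly the ``decoupling'' content the paper delegates to its citations of Russo--Van Roy and Xu--Raginsky.
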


\begin{proof}

The proof follows from applying Donsker-Varadhan's inequality~\cite[Theorem~5.2.1]{gray2011entropy} to~\eqref{eq:oopf_diff_expectations_thompson} using~\Cref{rem:oopf_diff_dists} in a similar fashion to~\cite{russo2016information,xu2017information}.
\end{proof}

\begin{restatable}{proposition}{RUSSOwasserstein}
\label{prop:Russo_wasserstein}
Suppose that $\cY$ is a metric space with metric $\rho$.  If the reward function $r: \cY \times \cA \to \bR$ is $L$-Lipschitz in $\cY$ under the metric $\rho$, then for any \emph{online optimization problem with partial feedback} $\Pi$
\begin{equation*}
      \textnormal{MBR}_{\Pi}  \leq L \sum_{t=1}^T  \bE \big[\bW(\bP_{Y_{t,A^\star}|A^\star}, \bP_{Y_{t,A^\star}|\hat{H}^t})\big].
\end{equation*}
\end{restatable}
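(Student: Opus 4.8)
The plan is to mirror the argument used for Proposition~\ref{prop:wasserstein} and for the bounded-reward case in Proposition~\ref{prop:Russo_wasserstein_bounded}, but keeping the Lipschitz constant $L$ general. First I would apply \Cref{lemma:thompson_sampling} to bound $\textnormal{MBR}_\Pi$ by the Thompson-sampling gap $R_\Pi(\kappa_\Theta) - r_\Pi(\kappa_\textnormal{H},\kappa_{\Theta|\textnormal{H}},\psi^\star)$, and then rewrite that gap in the per-step form~\eqref{eq:oopf_diff_expectations_thompson}, namely $\sum_{t=1}^T \bE\big[\,\bE[r'(Y_{t,A^\star})] - \bE[r'(Y_{t,\hat A_t})] \mid A^\star,\hat A_t,\hat H^t\,\big]$, in which the same preference function $r'$ appears in both inner expectations.

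The key structural ingredient is \Cref{rem:oopf_diff_dists} together with the probability-matching property of the Bayes-optimal posterior: conditioned on $\hat H^t$, the sampled parameters $\hat\Theta_t$ follow the posterior $\bP_{\Theta\mid\hat H^t}$, so the sampled action $\hat A_t$ shares the conditional law of $A^\star$, and the per-action outcome $Y_t$ is independent of $\hat A_t$ given $\hat H^t$. Using these facts I would identify the outcome entering the first inner expectation with the law $\bP_{Y_{t,A^\star}\mid A^\star}$ of the optimal-action outcome and the outcome entering the second with $\bP_{Y_{t,A^\star}\mid\hat H^t}$, so that each summand reduces to the difference between the integral of the single map $r'$ against two probability measures living on the common per-action outcome space $\cY'$.

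With both expectations written against measures on the same metric space, I would conclude by the Kantorovich–Rubinstein duality~\cite[Remark~6.5]{villani2009optimal}: for the $L$-Lipschitz $r'$, the difference of integrals is at most $L\,\bW(\bP_{Y_{t,A^\star}\mid A^\star},\bP_{Y_{t,A^\star}\mid\hat H^t})$. Summing over $t$ and taking the outer expectation over $(A^\star,\hat H^t)$ then gives the stated bound, in exact parallel with \Cref{prop:wasserstein}.

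I expect the principal difficulty to be the bookkeeping of the conditional independences rather than any analytic estimate: one must check, via the Markov structure $Y_t - \Theta - \hat\Theta_t - \hat H^t$ and the independence of the outcomes from the state, that the two measures fed into the Wasserstein distance are precisely the conditional laws claimed, so that the duality is applied to genuinely comparable distributions on $\cY'$. The hypothesis that $r$ is Lipschitz only in its $\cY$-argument is exactly what is needed here: since the action enters $r(Y_t,a)=r'(Y_{t,a})$ only by selecting an outcome coordinate, $r'$ is $L$-Lipschitz on $\cY'$ and Kantorovich–Rubinstein applies verbatim.
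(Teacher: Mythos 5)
Your proposal is correct and follows essentially the same route as the paper's own (very terse) proof: bound $\textnormal{MBR}_\Pi$ via Lemma~\ref{lemma:thompson_sampling}, pass to the per-step form~\eqref{eq:oopf_diff_expectations_thompson}, identify the two conditional laws via \Cref{rem:oopf_diff_dists}, and apply Kantorovich--Rubinstein duality to the $L$-Lipschitz preference function $r'$. The extra detail you supply (posterior matching of $\hat{A}_t$ and $A^\star$, and the observation that $r'$ inherits the Lipschitz constant since the action only selects the outcome coordinate) is exactly the bookkeeping the paper leaves implicit by citing~\cite{rodriguez2021tighter,wang2019information}.
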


\begin{proof}
The proof follows from applying  Kantorovich–Rubinstein duality~\cite[Remark~6.5]{villani2009optimal} to~\eqref{eq:oopf_diff_expectations_thompson} using~\Cref{rem:oopf_diff_dists} analogously  to~\cite{rodriguez2021tighter,wang2019information}.
\end{proof}

\begin{remark}
\label{rem:entropy_dominates}
One can show that the entropy of the optimal action $\ent(A^\star)$ upper bounds the sum of conditional mutual information between the optimal action $A^\star$ and the “per-action outcome” $Y_{t,A_t}$ given the history $\hat{H}^t$. This result is obtained in the same way as in \cite{russo2016information} through the following chain of inequalities,
\begin{align*}
    \sum_{t=1}^T \mi(A^\star; Y_{t,A_t}|\hat{H}^t) \stack{a}{\leq }\sum_{t=1}^T \mi(A^\star; (Y_{t,A_t},A_t)|\hat{H}^t)
    \stack{b}{=} \mi(A^\star; \{Y_{t,A_t},A_t\}_{t=1}^T)  
    \stack{c}{\leq} \ent(A^\star) 
\end{align*}
where~(a) follows from \cite[Theorem 2.3.5]{polyanskiy2014lecture}, equality (b) is given by the chain rule and~(c) is obtained from \cite[Theorem 2.4.4]{polyanskiy2014lecture}.
\end{remark}

\RUSSOboundedCorollary*
\begin{proof}

Under the assumption that the outcome $Y_t$ is perfectly revealed upon observing $Y_{t,a}$ for any $a\in\cA$, one can show the following chain of inequalities: 
\begin{align}
    \sum_{t=1}^T  \bE \big[\bW(\bP_{Y_{t,A^\star}|A^\star,\hat{H}^t}, \bP_{Y_{t,A^\star}|\hat{H}^t})\big] 
    &\leq \sum_{t=1}^T \bE \bigg[ \sqrt{\frac{1}{2} \KL{\bP_{Y_{t,A_t}|A^\star,\hat{H}^t}}{\bP_{Y_{t,A_t}|\hat{H}^t}}} \bigg]\label{eq:cor_rew_1} \\
    &\leq \sum_{t=1}^T  \sqrt{\frac{1}{2} \mi(A^\star; Y_{t,A_t}|\hat{H}^t)}\label{eq:cor_jens_1} \\
    &\leq \sqrt{\frac{1}{2} T \sum_{t=1}^T \mi(A^\star; Y_{t,A_t}|\hat{H}^t)} \label{eq:cor_cauchy}\\
    &\leq \sqrt{\frac{1}{2} T \ent(A^\star)} \nonumber
\end{align}
where~\eqref{eq:cor_rew_1} is obtained using the same arguments as in Remark \ref{rem:wasserstein_is_tighter} 
and, as $Y_t$ is perfectly revealed from observing $Y_{t,a}$ for any $a\in \cA$, we have that
$\KL{\bP_{Y_{t,a}|A^\star,\hat{H}^t}}{\bP_{Y_{t,a}|\hat{H}^t}} = \KL{\bP_{Y_{t}|A^\star,\hat{H}^t}}{\bP_{Y_{t}|\hat{H}^t}}$ 
. Then Jensen's inequality leads to~\eqref{eq:cor_jens_1} and Cauchy-Schwartz inequality to~\eqref{eq:cor_cauchy}. Finally, applying Remark \ref{rem:entropy_dominates} yields the desired result. 

When no information structure is assumed among the outcome $Y_t \equiv \{Y_{t,a}\}$ for all $a\in \cA$, inspired by the arguments used to prove~\cite[Proposition 3]{russo2016information}, one can show a looser inequality, through a chain of inequalities : 

\begin{align}
    \sum_{t=1}^T  \bE \big[\bW(\bP_{Y_{t,A^\star}|A^\star,\hat{H}^t}, \bP_{Y_{t,A^\star}|\hat{H}^t})\big] 
    &\leq \sum_{t=1}^T \bE \Bigg[ \sum_{a\in \cA} \bP(A^\star = a|\hat{H}^t) \sqrt{\frac{1}{2} \KL{\bP_{Y_{t,a}|A^\star=a,\hat{H}^t}}{\bP_{Y_{t,a}|\hat{H}^t}}} \Bigg] \label{eq:cor_2_rem_3} \\
    &\leq \sum_{t=1}^T \bE \Bigg[ \sqrt{\frac{1}{2} |\cA| \sum_{a\in \cA} \bP(A^\star = a|\hat{H}^t)^2  \KL{\bP_{Y_{t,a}|A^\star=a,\hat{H}^t}}{\bP_{Y_{t,a}|\hat{H}^t}}} \Bigg]\label{eq:cor_2_cauchy} \\
    &\leq \sum_{t=1}^T \bE \Bigg[ \sqrt{\frac{1}{2} |\cA| \sum_{a,b\in \cA} \bP(A^\star = a|\hat{H}^t) \bP(A^\star = b|\hat{H}^t) \KL{\bP_{Y_{t,a}|A^\star=b,\hat{H}^t}}{\bP_{Y_{t,a}|\hat{H}^t}}} \Bigg] \label{eq:cor_2_extra_terms}\\
    &\leq\sum_{t=1}^T \sqrt{\frac{1}{2} |\cA| \mi(A^\star; Y_{t,A_t}|\hat{H}^t)} \label{eq:cor_2_mi}\\
    &\leq \sqrt{\frac{1}{2} |\cA| T \sum_{t=1}^T \mi(A^\star; Y_{t,A_t}|\hat{H}^t)} \label{eq:cor_2_cauchy_2}\\
    &\leq \sqrt{\frac{1}{2} |\cA| T \ent(A^\star)} \nonumber 
\end{align}
where~\eqref{eq:cor_2_rem_3} follows from Remark \ref{rem:wasserstein_is_tighter}, ~\eqref{eq:cor_2_cauchy} is obtained using Cauchy-Schwartz inequality, and~\eqref{eq:cor_2_extra_terms} by adding the non-negative extra terms $ \frac{1}{2} |\cA| \sum_{a\in \cA}  \bP(A^\star = a|\hat{H}^t) \sum_{b\in \cA \setminus a} \bP(A^\star = b|\hat{H}^t) \KL{\bP_{Y_{t,a}|A^\star=b,\hat{H}^t}}{\bP_{Y_{t,a}|\hat{H}^t}}$ in the square root in~\eqref{eq:cor_2_cauchy}. Then, \eqref{eq:cor_2_mi} follows from using Jensen's inequality and identifying the conditional mutual information between the optimal action $A^\star$ and the “per-action outcome” $Y_{t,A_t}$ given the history $\hat{H}^t$. Lastly, Cauchy-Schwartz inequality leads to~\eqref{eq:cor_2_cauchy_2} and Remark \ref{rem:entropy_dominates} gives the claimed result. 

\end{proof}

\end{document}